\pdfoutput=1
\RequirePackage[T1]{fontenc}
\documentclass[12pt]{article}
\usepackage{graphicx}
\graphicspath{{./}}
\usepackage[skip=1em plus 0.2em minus 0.1em]{parskip}
\usepackage{booktabs}
\usepackage[height=8.85in,width=6.45in]{geometry}

\usepackage[utf8]{inputenc}
\usepackage{amsmath}
\usepackage{amssymb}
\usepackage{mathtools}
\numberwithin{equation}{section}
\usepackage{slashed}
\usepackage{braket}
\usepackage{enumitem}
\usepackage[svgnames]{xcolor}
\usepackage[colorlinks,citecolor=DarkGreen,linkcolor=FireBrick,urlcolor=FireBrick,linktocpage,unicode]{hyperref}
\usepackage{algorithm}
\usepackage{algpseudocode}
\usepackage{xcolor}

\newcommand{\fusionstep}[1]{\textcolor{blue!75!black}{#1}}

\usepackage{tocloft}

\usepackage{footmisc}

\allowdisplaybreaks
\usepackage{graphicx}
\usepackage{tikz}
\usepackage{tikz-cd}
\usepackage{times}
 
\usepackage{bm}
\usepackage{physics}
\usepackage{xcolor}
\usepackage{natbib}
\usepackage{mdframed}
\usepackage{nicefrac}
\usepackage{booktabs}
\usepackage{lipsum}
\usepackage{titlesec}
\usepackage{wrapfig,lipsum,booktabs}
\usepackage{authblk}
\usepackage{blindtext}
\usepackage[font=small]{caption}

\usepackage{algorithm}
\usepackage{algpseudocode}
\newcommand{\commentsymbol}{//}%
\algrenewcommand\algorithmiccomment[1]{\hfill {\footnotesize \commentsymbol{} #1}}

\usepackage{titletoc}
\usepackage{todonotes}
\usepackage{authblk}
\usepackage{setspace}
\usepackage{dsfont} %

\usepackage[nameinlink,capitalize,noabbrev]{cleveref}

\usepackage{CJK}

\usepackage{array}
\usepackage{bbm}
\usepackage{makecell}

\usepackage{dashbox}
\usepackage{xcolor}
\usepackage{colortbl}
\definecolor{lightyellow}{rgb}{1.0, 0.95, 0.7}
\definecolor{Blue}{rgb}{0, 0, 0.8}
\definecolor{blue}{rgb}{0,0,1}
\definecolor{darkgreen}{rgb}{0,0.40,0}
\definecolor{firebrick}{rgb}{0.698,0.133,0.133}

\definecolor{colorA}{rgb}{1,0,0}
\definecolor{colorB}{rgb}{0,0.3,1}
\definecolor{colorC}{rgb}{0.9,0.8,0.2}
\definecolor{colorD}{rgb}{0,0.65,0}
\definecolor{lesslightgray}{rgb}{0.5,0.5,0.5}
\definecolor{light-gray}{gray}{0.95}

\let\cite\citep 

\usepackage{amsthm}
\usepackage[many]{tcolorbox}
\usepackage{etoolbox}
\BeforeBeginEnvironment{proof}{\par\vspace{-1\baselineskip}}
\AfterEndEnvironment  {proof}{\par\vspace{-1.5\baselineskip}}

\newtheoremstyle{theoremstyle}
  {.5\baselineskip} %
  {.5\baselineskip} %
  {}                  %
  {}                  %
  {\bfseries}        %
  {.}                 %
  {1em}               %
  {}                  %

\theoremstyle{theoremstyle}
\newtheorem{theorem}{Theorem}[section]

\tcolorboxenvironment{theorem}{
  breakable,
  colback=black!10,
  colframe=white,%
  width=\dimexpr\linewidth+10pt\relax,%
  enlarge left by=-5pt,%
  enlarge right by=-5pt,%
  boxsep=5pt,%
  boxrule=0pt,
  left=0pt,right=0pt,top=0pt,bottom=0pt,
  sharp corners,
  before skip=0.5\baselineskip, %
  after skip=0.5\baselineskip,  %
  fonttitle=\bfseries, %
  coltitle=black %
}
\tcolorboxenvironment{remark}{
  blanker,
  breakable,
  before skip=.8\baselineskip,  %
  after  skip=.8\baselineskip   %
}

\tcolorboxenvironment{proposition}{
  breakable,
  colback=black!10,
  colframe=white,%
  width=\dimexpr\linewidth+10pt\relax,%
  enlarge left by=-5pt,%
  enlarge right by=-5pt,%
  boxsep=5pt,%
  boxrule=0pt,
  left=0pt,right=0pt,top=0pt,bottom=0pt,
  sharp corners,
  before skip=0.5\baselineskip, %
  after skip=0.5\baselineskip,  %
  fonttitle=\bfseries, %
  coltitle=black %
}

\tcolorboxenvironment{lemma}{
  breakable,
  colback=black!10,
  colframe=white,%
  width=\dimexpr\linewidth+10pt\relax,%
  enlarge left by=-5pt,%
  enlarge right by=-5pt,%
  boxsep=5pt,%
  boxrule=0pt,
  left=0pt,right=0pt,top=0pt,bottom=0pt,
  sharp corners,
  before skip=0.5\baselineskip, %
  after skip=0.5\baselineskip,  %
  fonttitle=\bfseries, %
  coltitle=black %
}

\tcolorboxenvironment{corollary}{
  breakable,
  colback=black!10,
  colframe=white,%
  width=\dimexpr\linewidth+10pt\relax,%
  enlarge left by=-5pt,%
  enlarge right by=-5pt,%
  boxsep=5pt,%
  boxrule=0pt,
  left=0pt,right=0pt,top=0pt,bottom=0pt,
  sharp corners,
  before skip=0.5\baselineskip, %
  after skip=0.5\baselineskip,  %
  fonttitle=\bfseries, %
  coltitle=black %
}

\tcolorboxenvironment{definition}{
  breakable,
  colback=black!10,
  colframe=white,%
  width=\dimexpr\linewidth+10pt\relax,%
  enlarge left by=-5pt,%
  enlarge right by=-5pt,%
  boxsep=5pt,%
  boxrule=0pt,
  left=0pt,right=0pt,top=0pt,bottom=0pt,
  sharp corners,
  before skip=0.5\baselineskip, %
  after skip=0.5\baselineskip,  %
  fonttitle=\bfseries, %
  coltitle=black %
}
\tcolorboxenvironment{assumption}{
  breakable,
  colback=black!10,
  colframe=white,%
  width=\dimexpr\linewidth+10pt\relax,%
  enlarge left by=-5pt,%
  enlarge right by=-5pt,%
  boxsep=5pt,%
  boxrule=0pt,
  left=0pt,right=0pt,top=0pt,bottom=0pt,
  sharp corners,
  before skip=0.5\baselineskip, %
  after skip=0.5\baselineskip,  %
  fonttitle=\bfseries, %
  coltitle=black %
}

\crefname{theorem}{Theorem}{Theorems}
\crefname{proposition}{Proposition}{Propositions}
\crefname{lemma}{Lemma}{Lemmas}
\crefname{corollary}{Corollary}{Corollaries}
\crefname{definition}{Definition}{Definitions}
\crefname{assumption}{Assumption}{Assumptions}
\crefname{remark}{Remark}{Remarks}
\crefname{problem}{Problem}{Problems}
\crefname{property}{Property}{property}

\tcolorboxenvironment{hypothesis}{
  breakable,
  colback=black!10,
  colframe=white,%
  width=\dimexpr\linewidth+10pt\relax,%
  enlarge left by=-5pt,%
  enlarge right by=-5pt,%
  boxsep=5pt,%
  boxrule=0pt,
  left=0pt,right=0pt,top=0pt,bottom=0pt,
  sharp corners,
  before skip=0.5\baselineskip, %
  after skip=0.5\baselineskip,  %
  fonttitle=\bfseries, %
  coltitle=black %
}
\crefname{hypothesis}{Hypothesis}{Hypothesises}

\tcolorboxenvironment{fact}{
  breakable,
  colback=black!10,
  colframe=white,%
  width=\dimexpr\linewidth+10pt\relax,%
  enlarge left by=-5pt,%
  enlarge right by=-5pt,%
  boxsep=5pt,%
  boxrule=0pt,
  left=0pt,right=0pt,top=0pt,bottom=0pt,
  sharp corners,
  before skip=0.5\baselineskip, %
  after skip=0.5\baselineskip,  %
  fonttitle=\bfseries, %
  coltitle=black %
}
\crefname{fact}{Fact}{Facts}

\tcolorboxenvironment{example}{
  breakable,
  colback=black!10,
  colframe=white,%
  width=\dimexpr\linewidth+10pt\relax,%
  enlarge left by=-5pt,%
  enlarge right by=-5pt,%
  boxsep=5pt,%
  boxrule=0pt,
  left=0pt,right=0pt,top=0pt,bottom=0pt,
  sharp corners,
  before skip=0.5\baselineskip, %
  after skip=0.5\baselineskip,  %
  fonttitle=\bfseries, %
  coltitle=black %
}
\crefname{example}{Example}{Examples}

\tcolorboxenvironment{question}{
  breakable,
  colback=black!10,
  colframe=white,%
  width=\dimexpr\linewidth+10pt\relax,%
  enlarge left by=-5pt,%
  enlarge right by=-5pt,%
  boxsep=5pt,%
  boxrule=0pt,
  left=0pt,right=0pt,top=0pt,bottom=0pt,
  sharp corners,
  before skip=0.5\baselineskip, %
  after skip=0.5\baselineskip,  %
  fonttitle=\bfseries, %
  coltitle=black %
}

\crefname{question}{Question}{Questions}
\numberwithin{equation}{section}
\numberwithin{theorem}{section}
\numberwithin{proposition}{section}
\numberwithin{definition}{section}
\numberwithin{lemma}{section}
\numberwithin{assumption}{section}
\numberwithin{remark}{section}

\usepackage{lipsum}

\makeatletter
\let\save@mathaccent\mathaccent
\newcommand*\if@single[3]{%
    \setbox0\hbox{${\mathaccent"0362{#1}}^H$}%
    \setbox2\hbox{${\mathaccent"0362{\kern0pt#1}}^H$}%
    \ifdim\ht0=\ht2 #3\else #2\fi
}
\newcommand*\rel@kern[1]{\kern#1\dimexpr\macc@kerna}
\newcommand*\widebar[1]{\@ifnextchar^{{\wide@bar{#1}{0}}}{\wide@bar{#1}{1}}}
\newcommand*\wide@bar[2]{\if@single{#1}{\wide@bar@{#1}{#2}{1}}{\wide@bar@{#1}{#2}{2}}}
\newcommand*\wide@bar@[3]{%
    \begingroup
    \def\mathaccent##1##2{%
        \let\mathaccent\save@mathaccent
        \if#32 \let\macc@nucleus\first@char \fi
        \setbox\z@\hbox{$\macc@style{\macc@nucleus}_{}$}%
        \setbox\tw@\hbox{$\macc@style{\macc@nucleus}{}_{}$}%
        \dimen@\wd\tw@
        \advance\dimen@-\wd\z@
        \divide\dimen@ 3
        \@tempdima\wd\tw@
        \advance\@tempdima-\scriptspace
        \divide\@tempdima 10
        \advance\dimen@-\@tempdima
        \ifdim\dimen@>\z@ \dimen@0pt\fi
        \rel@kern{0.6}\kern-\dimen@
        \if#31
        \overline{\rel@kern{-0.6}\kern\dimen@\macc@nucleus\rel@kern{0.4}\kern\dimen@}%
        \advance\dimen@0.4\dimexpr\macc@kerna
        \let\final@kern#2%
        \ifdim\dimen@<\z@ \let\final@kern1\fi
        \if\final@kern1 \kern-\dimen@\fi
        \else
        \overline{\rel@kern{-0.6}\kern\dimen@#1}%
        \fi
    }%
    \macc@depth\@ne
    \let\math@bgroup\@empty \let\math@egroup\macc@set@skewchar
    \mathsurround\z@ \frozen@everymath{\mathgroup\macc@group\relax}%
    \macc@set@skewchar\relax
    \let\mathaccentV\macc@nested@a
    \if#31
    \macc@nested@a\relax111{#1}%
    \else
    \def\gobble@till@marker##1\endmarker{}%
    \futurelet\first@char\gobble@till@marker#1\endmarker
    \ifcat\noexpand\first@char A\else
    \def\first@char{}%
    \fi
    \macc@nested@a\relax111{\first@char}%
    \fi
    \endgroup
    }
\makeatother
\let\bar\widebar
\usepackage{longtable}

\setlist[itemize,enumerate]{
  parsep=\parskip,                                   %
  itemsep=\dimexpr .3em - \parskip\relax plus 2pt,   %
  topsep=\dimexpr 6pt - \parskip\relax plus 1pt minus 1pt,
  partopsep=0pt,
  listparindent=\parindent
}

\begin{document}
\begin{titlepage}

\begin{flushright}
Last Update: \today
\end{flushright}

\vskip 2.5em
\begin{center}

{
\LARGE \bfseries %
\begin{spacing}{1.15} %
Towards Generalizing Pretrained All-Atom Representations to Unified Binder Design %
\end{spacing}
}

\vskip 1.5em

Maojiang Su$^{\dagger*}$
\quad
Jiale Chen$^{\S*}$
\quad
Qijie Zhu$^{\S*}$
\quad
Shang Wu$^{\dagger}$
\quad
Jiayi Wang$^{\dagger}$
\\[0.35em]
Jihai Zhao$^{\dagger}$
\quad
Guo Ye$^{\dagger}$
\quad
Yibo Wen$^{\dagger}$
\quad
Han Liu$^{\dagger\S}$

\def\thefootnote{*}
\footnotetext{
Equal contribution.
Project page:
\href{https://jiale0402.github.io/all-atom-omni/}
{jiale0402.github.io/all-atom-omni}.
}

\vskip 1.2em

{\small
\begin{tabular}{ll}
 $^\dagger\;$Center for Foundation Models and Generative AI, Northwestern University\\
 \hphantom{$^\ddag\;$}Department of Computer Science, Northwestern University\\

 $^\S\;$Department of Statistics and Data Science, Northwestern University
\end{tabular}}

\vskip 1.2cm

\end{center}

PairFormer-based all-atom models demonstrate strong capabilities in biomolecular binder design.
However, existing methods often use all-atom representations for modality-specific binder design, such as protein and peptide design.
In contrast, recent geometric latent diffusion models pursue unified binder design by representing molecular blocks from different modalities in a shared latent space.
These complementary strengths motivate us to investigate how to generalize the use of pretrained all-atom representations to unified binder design.
We introduce \textbf{PairFuse}, a parameter-efficient method that enables this generalization through representation fusion. Specifically, we use lightweight layerwise adapters to integrate pretrained all-atom representations into an independently pretrained geometric latent diffusion model. 
Our adapters integrate pretrained all-atom representations through two mechanisms: enriching the diffusion model’s molecular block features and incorporating pairwise structural information into the geometric diffusion model's denoiser.
Theoretically, we prove that the augmented denoiser preserves geometry and pairwise attention fusion provides additional attention expressivity beyond feature-only fusion.
Experimentally, across antibody, peptide, and small molecule binder-design benchmarks, PairFuse achieves stronger performance than the pretrained diffusion model and existing representation-fusion baselines.

\end{titlepage}

\section{Introduction}
\label{sec:introduction}

Recent PairFormer-based approaches demonstrate strong capabilities in biomolecular binder design, including RFdiffusion3, BoltzDesign1, BoltzGen, and PPIFlow
\citep{butcher2025rfdiffusion3,cho2025boltzdesign1,stark2025boltzgen,yu2026ppiflow}.
However, existing methods often use all-atom representations for modality-specific binder design, such as protein and peptide design.
In parallel, geometric latent diffusion models pursue unified binder design by representing molecular blocks from different modalities in a shared latent space.
Building on this formulation, UniMoMo and AnewOmni support binder design across different molecular classes
\citep{kong2025unimomo,kong2026anewomni}.

These advances motivate us to generalize pretrained all-atom representations to unified binder design through parameter-efficient representation fusion. Existing methods, including IP-Adapter \cite{ye2023ipadapter}, LLM4GEN \cite{liu2025llm4gen}, and EnzyAdapter \cite{song2025enzycontrol}, incorporate pretrained representations into generative models through lightweight adaptation. 
However, these methods focus on feature-level conditioning and do not explicitly address how pairwise structural information can be transferred to guide geometric interactions during denoising. The challenge is therefore to integrate pretrained information about both molecular components and their interactions into an independently pretrained geometric latent diffusion model.

To address these challenges, we introduce \textbf{PairFuse}, a parameter-efficient method for extending pretrained all-atom representations to unified binder design through representation fusion. We instantiate PairFuse on AnewOmni~\citep{kong2026anewomni}, an independently pretrained geometric latent diffusion model, using single and pair representations extracted from a pretrained BoltzGen PairFormer~\citep{stark2025boltzgen}. Our lightweight layerwise adapters fuse these representations into the geometric diffusion denoiser through two mechanisms. First, cross-attention enriches the diffusion model's invariant molecular block features with pretrained structural information. Second, additive attention biases incorporate pretrained pairwise relationships into the Transformer's attention to guide denoising. This design preserves the original equivariant vector operations while keeping both pretrained models frozen and training only the adapters.

A key consideration in representation fusion for geometric latent diffusion is preserving rotational equivariance when incorporating pretrained structural information into native geometric attention.
The geometric denoiser processes three-dimensional molecular structures, and its coordinate predictions transform consistently under rotations
\citep{fuchs2020se3transformer,satorras2021egnn,jing2021gvp,kong2024get}.
PairFuse incorporates pretrained all-atom representations into rotation-invariant block features and native geometric attention without modifying the equivariant vector operations.
We prove that the augmented denoiser preserves rotational equivariance under rotation-invariant transferred representations.
This guarantee holds for arbitrary learned adapter parameters, establishing the geometric compatibility of PairFuse.

We evaluate PairFuse on antibody, peptide, and small-molecule binder-design benchmarks:
RAbD~\citep{adolfbryfogle2018rabd},
LNR~\citep{tsaban2022peptide},
and CrossDocked2020~\citep{francoeur2020crossdocked}.
PairFuse improves structural accuracy and interface quality on antibody and peptide design and achieves competitive performance on small-molecule design.
These results support the effectiveness of parameter-efficient representation fusion for unified binder design.

\paragraph{Contributions.}
Our contributions are three-fold.
\begin{itemize}
    \item \textbf{Pretrained All-Atom Representation Fusion.}
    We propose PairFuse, a parameter-efficient method that extends the use of pretrained all-atom representations to unified binder design.
    Our layerwise adapters integrate pretrained single and pair representations into the evolving invariant block features and native geometric attention of a geometric latent diffusion model.
    We freeze both pretrained models and retain the original latent space, sampling procedure, and all-atom decoder
    (\cref{sec:methodology}).

    \item \textbf{Theoretical Analysis.}
    We prove that PairFuse preserves rotational equivariance under rotation-invariant transferred representations
    (\cref{thm:geometry-compatible-transfer}).
    We further characterize the expressivity of pairwise attention fusion.
    Under a stated rank condition, direct attention fusion expresses attention patterns that scalar-feature-only fusion cannot reproduce through fixed-width native query--key projections
    (\cref{thm:pairwise-expressivity}).

    \item \textbf{Empirical Evaluation.}
    We evaluate PairFuse across antibody, peptide, and small-molecule binder design using shared fusion checkpoints.
    PairFuse improves structural accuracy and interface quality on antibody and peptide benchmarks and achieves favorable results on small-molecule design
    (\cref{sec:lnr-results}).
    We further investigate the contributions of single and pair representations and pairwise attention fusion through ablation studies
    (\cref{app:ablations}).
    We report experimental details, statistical analyses, and computational efficiency
    (\cref{app:experimental-details}).
\end{itemize}

\section{Preliminaries}
\label{sec:preliminaries}

We introduce the unified biomolecular design setting, geometric latent diffusion, the Equivariant Pretrained Transformer (EPT), and pretrained all-atom representations.

\paragraph{Unified Binder Design and Latent Representation.}
Unified geometric latent diffusion models, such as UniMoMo and AnewOmni~\citep{kong2025unimomo,kong2026anewomni}, represent different molecular classes in a shared latent space.
Let $c$ denote the design condition, including the target, design region, binder type, and prescribed constraints.
We seek to generate a binder $x$ from the conditional distribution $p_\theta(x\mid c)$.
Specifically, each molecular block (e.g., an amino acid residue in a peptide or a molecular fragment in a small-molecule ligand) is represented by an invariant feature and a three-dimensional coordinate, yielding
\begin{align*}
    z=(h,q),
    \qquad
    h\in\mathbb R^{n\times d},
    \quad
    q\in\mathbb R^{n\times3},
\end{align*}
for $n$ molecular blocks.
A pretrained all-atom decoder $D$ reconstructs the binder from the generated latent blocks and design condition $c$.

\paragraph{Geometric Latent Diffusion.}
Let $z_t=(h_t,q_t)$ denote the joint latent state of generated and context blocks at diffusion step $t$, with $z_0$ denoting the clean state.
For each generated block $i$, the forward noising process is
\begin{align*}
    h_{t,i}
    &=
    \alpha_t h_{0,i}+\sigma_t\epsilon_{h,i},
    \\
    q_{t,i}
    &=
    \alpha_t q_{0,i}+\sigma_t\epsilon_{q,i},
\end{align*}
where $\alpha_t^2+\sigma_t^2=1$ and the noise terms are standard Gaussian vectors of matching dimensions.
Context blocks remain fixed at their clean states.
Let $a$ denote the invariant conditioning embeddings constructed from the design information.
The denoiser predicts noise on generated blocks:
\begin{align*}
    \epsilon_\theta(z_t,t\mid a)
    =
    (\epsilon_{\theta,h},\epsilon_{\theta,q}).
\end{align*}
Sampling applies the reverse-diffusion updates with context blocks fixed.
A geometric denoiser is required to preserve rotational equivariance.
For any global rotation $R\in\mathrm{SO}(3)$, define
$R\cdot(h,q)=(h,qR^\top)$, where coordinates are stored row-wise.
The denoiser should satisfy
\begin{equation}
\begin{aligned}
    \epsilon_{\theta,h}(R\cdot z_t,t\mid a)
    &=
    \epsilon_{\theta,h}(z_t,t\mid a),
    \\
    \epsilon_{\theta,q}(R\cdot z_t,t\mid a)
    &=
    R\cdot\epsilon_{\theta,q}(z_t,t\mid a).
\end{aligned}
\label{eq:geometric-symmetry}
\end{equation}

\paragraph{EPT Denoiser.}
We consider an EPT-based geometric denoiser~\citep{jiao2026equivariant}.
The native input transformation and geometric initialization are
\begin{align*}
    U^{\mathrm{in}}
    &=
    \operatorname{InputEmbedding}_{\theta}(h_t,a,t),
    \\
    (U^{(0)},\vec V^{(0)})
    &=
    \operatorname{GraphInitialization}_{\theta}
    (U^{\mathrm{in}},q_t),
\end{align*}
where $U^{\mathrm{in}}\in\mathbb R^{n\times d_{\mathrm E}}$ and $d_{\mathrm E}$ is the hidden width.
At layer $\ell$, EPT maintains scalar and vector features
\begin{align*}
    U^{(\ell)}\in\mathbb R^{n\times d_{\mathrm E}},
    \qquad
    \vec V^{(\ell)}\in\mathbb R^{n\times3\times d_{\mathrm E}}.
\end{align*}
Each EPT layer performs equivariant self-attention and geometric feed-forward updates.
For attention head $k$, the native logits
$L_k^{(\ell)}\in\mathbb R^{n\times n}$
combine scalar query--key similarities with geometric and edge-feature biases.
The resulting attention weights aggregate scalar and vector features, while the native output heads predict noise on generated blocks.
We summarize the native EPT computation in \cref{alg:native-ept}.
Under global rotations, the scalar features and attention logits of EPT remain invariant, and the vector features transform equivariantly:
\begin{align*}
    U^{(\ell)}
    &\mapsto U^{(\ell)},
    \qquad
    L_k^{(\ell)}
    \mapsto L_k^{(\ell)},
    \\
    \vec V^{(\ell)}
    &\mapsto R\cdot\vec V^{(\ell)}.
\end{align*}
These transformation properties ensure that the EPT denoiser satisfies \eqref{eq:geometric-symmetry}.

\paragraph{Pretrained All-Atom Representations.}
PairFormer-based all-atom binder-design models, such as RFdiffusion3 and BoltzGen~\citep{butcher2025rfdiffusion3,stark2025boltzgen}, learn single and pair representations that capture token-level structural information and pairwise interactions.
Given the visible design condition $c$, we denote these pretrained representations by $r(c)=(S,P)$, where $S$ contains token-level single representations and $P$ contains pair representations encoding learned relations between token pairs.

\textbf{
\begin{figure}[t]
    \centering
    \includegraphics[width=\linewidth]{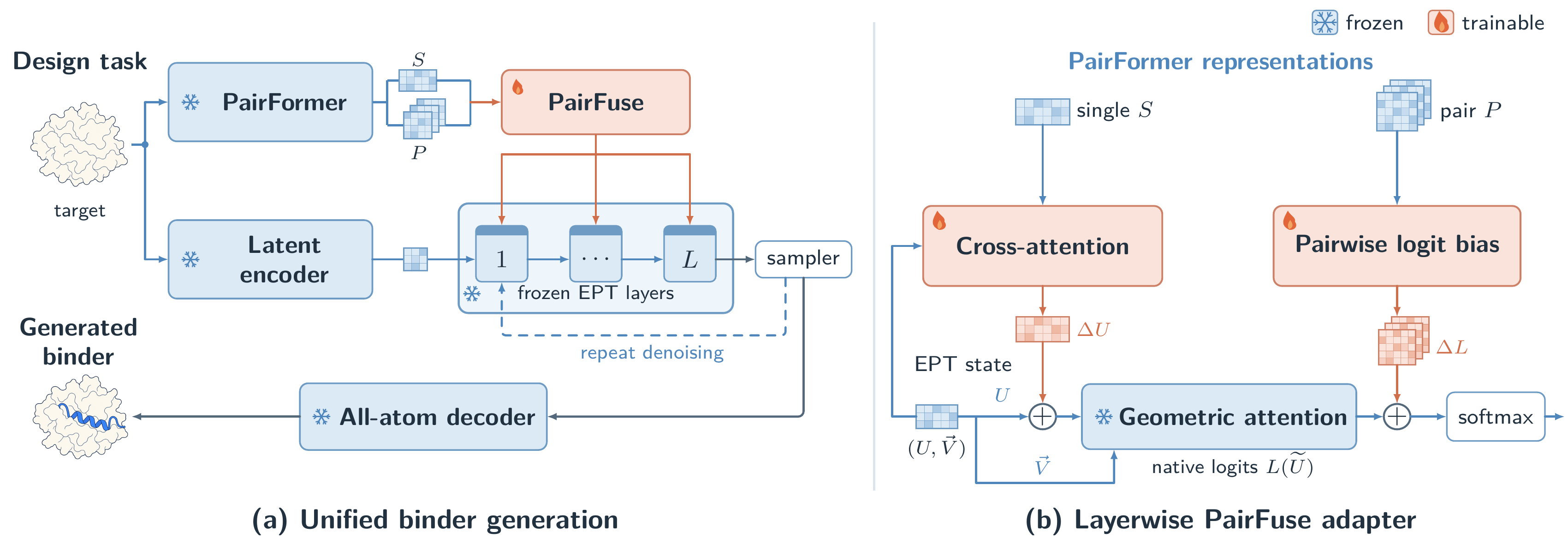}
    \caption{
    Overview of PairFuse.
    A frozen BoltzGen PairFormer extracts pretrained single and pair representations, while AnewOmni provides the geometric latent diffusion model.
    At each denoising step, trainable layerwise adapters integrate these representations into the invariant scalar features and native geometric attention logits of EPT.
    The original sampler and all-atom decoder generate the final binder.
    }
    \label{fig:method-overview}
\end{figure}
}

\section{Pretrained All-Atom Representation Fusion}
\label{sec:methodology}

We introduce \textbf{PairFuse}, a parameter-efficient method that integrates pretrained all-atom representations into geometric latent diffusion through layerwise fusion.
We instantiate PairFuse on AnewOmni~\citep{kong2026anewomni}, using single and pair representations extracted from a pretrained BoltzGen PairFormer~\citep{stark2025boltzgen}.
As illustrated in \cref{fig:method-overview}, our layerwise adapters integrate these representations into the evolving invariant scalar features and native geometric attention logits of EPT.
We freeze both pretrained models and train only the adapters.
We first introduce the layerwise representation fusion mechanism
(\cref{sec:representation-transfer}),
then establish its geometric compatibility
(\cref{sec:geometry-compatibility}),
and finally present the training and inference procedures
(\cref{sec:training-guidance}).

\subsection{Layerwise All-Atom Representation Fusion}
\label{sec:representation-transfer}

PairFuse integrates pretrained PairFormer representations into two components of each EPT layer: invariant scalar features and native attention logits.
We introduce scalar-feature fusion and pairwise attention fusion to incorporate pretrained structural information into these components.

\paragraph{Scalar-Feature Fusion.}
Given the design condition $c$, we extract pretrained single and pair representations $r(c)=(S,P)$ from a frozen PairFormer.
We mask the unknown binder sequence and coordinates during representation extraction.
We align the PairFormer tokens with the molecular blocks of the geometric denoiser through molecular correspondence.
Let $m_i\in\{0,1\}$ indicate whether block $i$ has a corresponding PairFormer token.
We use block-aligned representations $S$ and $P$, and apply correspondence masks wherever PairFormer representations are unavailable.

At each EPT layer, PairFuse uses the current scalar features $U$ as queries to attend to pretrained single representations.
A trainable encoder $f_s$ produces $\bar S=f_s(S)$.
For each attention head, we compute
\begin{align*}
    Q=UW_Q,
    \qquad
    K=\bar S W_K,
    \qquad
    V=\bar S W_V,
\end{align*}
where $W_Q$, $W_K$, and $W_V$ are trainable query, key, and value projection matrices, respectively.
We then compute the cross-attention weights with an additional pair-dependent bias:
\begin{align}
    A
    =
    \operatorname{MaskedSoftmax}_m
    \left(
        \frac{QK^\top}{\sqrt{d_h}}
        +
        b(P)
    \right),
    \label{eq:feature-attention}
\end{align}
where $d_h$ denotes the attention-head dimension and $b$ is a trainable projection that maps pair representations to attention biases.
The masked softmax attends only to keys $j$ with $m_j=1$, and the pair-dependent bias contributes only when both corresponding tokens are available.

We then update the scalar features $U$ through
\begin{align}
    \Delta U
    &=
    \operatorname{Concat}_{\mathrm{heads}}(AV)W_O
    +
    \operatorname{diag}(m)\bar S W_D,
    \notag\\
    \widetilde U
    &=
    U+\Delta U,
    \label{eq:feature-fusion}
\end{align}
where $W_O$ and $W_D$ are trainable output and direct-projection matrices.
The first term transfers pretrained information through cross-attention, including to blocks without direct PairFormer correspondence.
The second term directly incorporates single representations at blocks with available PairFormer tokens.
We share $f_s$ across EPT layers and use layer-specific attention projections.

\paragraph{Pairwise Attention Fusion.}
PairFuse further integrates pretrained representations into the native geometric attention of EPT.
As described in \cref{alg:native-ept}, each EPT layer computes attention logits $L(U)$ from scalar features and geometric information.
We introduce an additive bias $\Delta L$ to modulate these logits using pretrained single and pair representations.
For query block $i$ and key block $j$, we compute
\begin{align}
    \Delta L_{ij}
    &=
    \frac{m_j}{\sqrt{d_b}}
    (U_iW^L_{\mathrm Q})
    (\operatorname{LN}(S_j)W^L_{\mathrm K})^\top+
    m_i m_j f_p(P_{ij})W^L_{\mathrm P},
    \notag\\
    \widetilde L
    &=
    L(\widetilde U)+\Delta L,
    \label{eq:attention-fusion}
\end{align}
where $d_b$ denotes the projection dimension, $\operatorname{LN}$ denotes layer normalization, and $f_p$ is a trainable pair encoder shared across EPT layers.
The matrices $W^L_{\mathrm Q}$, $W^L_{\mathrm K}$, and $W^L_{\mathrm P}$ are trainable projections that produce the query, key, and pair-dependent attention contributions, respectively.
The first term uses pretrained single representations whenever the key block has a corresponding PairFormer token.
The second term uses pair representations when both blocks have corresponding tokens.
We compute $\Delta L$ from the original scalar features $U$ and compute the native attention logits using the updated features $\widetilde U$.

The augmented attention uses $\widetilde U$ and $\widetilde L$ to aggregate scalar and vector features through the original EPT operations.
Together, the two fusion mechanisms update block representations and modulate geometric interactions without modifying the native equivariant vector operations.
PairFuse reuses the pretrained PairFormer representations across diffusion steps and updates its fusion outputs using the evolving EPT features.
We summarize the augmented EPT architecture and fusion modules in
\cref{alg:layerwise-fusion,alg:fusion-modules}.
We denote the resulting noise prediction on generated blocks by
\begin{align}
    \epsilon_{\mathrm{aug}}(z_t,t\mid a,S,P)
    =
    \epsilon_{\theta,\phi}(z_t,t\mid a,S,P),
    \label{eq:augmented-denoiser}
\end{align}
where $\theta$ denotes the frozen denoiser parameters and $\phi$ collects all trainable fusion parameters.

\subsection{Theory}
\label{sec:geometry-compatibility}

We analyze PairFuse from two perspectives.
First, we prove that scalar-feature fusion and pairwise attention fusion preserve the denoiser's rotational equivariance (\cref{thm:geometry-compatible-transfer}).
Second, we study how pairwise attention fusion enhances attention expressivity (\cref{thm:pairwise-expressivity}).

PairFuse updates the denoiser's invariant scalar features and native geometric attention logits.
We first prove that the augmented denoiser preserves the geometry of latent diffusion \eqref{eq:geometric-symmetry}.

\begin{theorem}[Geometric Compatibility]
\label{thm:geometry-compatible-transfer}
Suppose the native conditioning features $a$, the transferred representations $S,P$, and the correspondence mask $m$ are rotation invariant.
Assume that the native input transformations and output heads preserve scalar invariance and vector equivariance, and that each EPT layer preserves these properties for invariant attention weights.
Then, for any fusion parameters $\phi$ and every $R\in\mathrm{SO}(3)$, the augmented denoiser satisfies
\begin{align}
    \epsilon_{\mathrm{aug},h}(R\cdot z_t,t\mid a,S,P)
    &=
    \epsilon_{\mathrm{aug},h}(z_t,t\mid a,S,P),
    \label{eq:augmented-invariance}
    \\
    \epsilon_{\mathrm{aug},q}(R\cdot z_t,t\mid a,S,P)
    &=
    R\cdot\epsilon_{\mathrm{aug},q}(z_t,t\mid a,S,P).
    \label{eq:augmented-equivariance}
\end{align}
Here $R\cdot z_t$ rotates generated and context coordinates jointly, together with any native geometric prompts omitted from the notation.
\end{theorem}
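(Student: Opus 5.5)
We argue by induction over the layers of the augmented EPT, tracking for each layer $\ell$ the pair $(U^{(\ell)},\vec V^{(\ell)})$ computed from $R\cdot z_t$ versus from $z_t$. The inductive hypothesis is that the scalar features are unchanged and the vector features are rotated, $\vec V^{(\ell)}\mapsto R\cdot\vec V^{(\ell)}$. The base case follows from the assumption that $\operatorname{InputEmbedding}_\theta$ and $\operatorname{GraphInitialization}_\theta$ preserve scalar invariance and vector equivariance. These maps take as inputs $h_t$ (unchanged under $R$), $a$ (invariant by hypothesis), $t$, and $q_t R^\top$. The conclusion then follows from the analogous assumption on the output heads: if the final scalar features are invariant and the final vector features equivariant, the heads give \eqref{eq:augmented-invariance} and \eqref{eq:augmented-equivariance}.

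For the inductive step we split the augmented layer into its two fusion modules followed by the native layer.

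\emph{Scalar-feature fusion.} $\bar S=f_s(S)$ is invariant because $S$ is, and so are $K=\bar SW_K$ and $V=\bar SW_V$. By the inductive hypothesis $Q=UW_Q$ is invariant. The bias $b(P)$ is invariant because $P$ is. The masked softmax in \eqref{eq:feature-attention} uses only the invariant mask $m$, so $A$ is invariant. Hence $\Delta U$ in \eqref{eq:feature-fusion}, which is built from $A$, $V$, $m$, $\bar S$ and fixed matrices $W_O,W_D$, is invariant, and so is $\widetilde U=U+\Delta U$.

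\emph{Pairwise attention fusion.} Each term of $\Delta L_{ij}$ in \eqref{eq:attention-fusion} is a function of $U_i$, $S_j$, $P_{ij}$ and $m_i,m_j$ only. All of these are invariant, so $\Delta L$ is invariant. The native logits $L(\widetilde U)$ are invariant: we apply the stated property that EPT logits are invariant whenever the scalar input is invariant and the vector and geometric inputs transform equivariantly. Here we feed the invariant $\widetilde U$ in place of $U$, and the geometric bias terms depend only on coordinates through rotation-invariant quantities.

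Consequently $\widetilde L$ and the resulting attention weights are invariant. The hypothesis that each EPT layer preserves scalar invariance and vector equivariance for invariant attention weights then gives $U^{(\ell+1)}$ invariant and $\vec V^{(\ell+1)}\mapsto R\cdot\vec V^{(\ell+1)}$. This closes the induction. The argument never constrains $\phi$, so it holds for every choice of fusion parameters.

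The main obstacle is making precise that substituting $\widetilde U$ and adding $\Delta L$ is covered by the layer-wise hypothesis. The native layer must be viewed as a map $(U,\vec V,\text{weights})\mapsto(U',\vec V')$ that is equivariant whenever its scalar inputs and weights are invariant, regardless of how they were produced. I would first state this decomposition explicitly, following \cref{alg:native-ept}: attention weights are a softmax of invariant logits, and the vector updates are invariant-weighted linear combinations of equivariant vectors. With that in place, the fusion modules enter only as invariant perturbations of invariant quantities.

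A secondary point is that the context coordinates and any geometric prompts must be rotated jointly with $q_t$, as the statement specifies. Otherwise relative geometric quantities would not be invariant.
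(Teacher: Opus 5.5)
Your proposal is correct and follows the paper's proof essentially step for step. Both arguments use induction over layers, derive invariant $\Delta U$ and $\Delta L$ from invariant $U,S,P,m$, obtain invariant $L(\widetilde U)$ from rotation-invariant geometric biases, and then apply the layer hypothesis for invariant attention weights; the paper handles your ``main obstacle'' the same way, by checking that the channel projection $W_v$ and the aggregation over block indices commute with rotation of the spatial axis, and it adds only the minor remarks that heads are treated separately and that selecting generated-block rows at the output commutes with rotation.
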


\begin{proof}[Proof Sketch]
At each layer, invariant $U,S,P,m$ yield invariant updates $\Delta U$ and $\Delta L$ in
\eqref{eq:feature-fusion} and \eqref{eq:attention-fusion}.
The updated scalar features and attention weights therefore remain invariant.
The native geometric operations preserve scalar invariance and vector equivariance under these updates.
Induction over the layers  establishes the result.
We provide the complete proof in \cref{app:geometry-proof}.
\end{proof}

\Cref{thm:geometry-compatible-transfer} shows that PairFuse preserves rotational equivariance for arbitrary adapter parameters.
The fusion adapters therefore learn to incorporate pretrained structural information without relearning the denoiser's rotational symmetry.

Scalar-feature fusion affects native attention through its query and key projections.
We examine whether this indirect mechanism can replace the direct attention-logit updates in \eqref{eq:attention-fusion}.
Our analysis builds on the low-rank structure of dot-product attention~\citep{bhojanapalli2020lowrank}.
Fix a diffusion state, an EPT layer, and one native attention head.
Consider $n$ valid blocks with all query-key pairs allowed.
We write the native logits as
\begin{equation}
    L(U)
    =
    Q_{\mathrm{nat}}(U)
    K_{\mathrm{nat}}(U)^\top
    +G,
    \label{eq:native-logit-factorization}
\end{equation}
where
$Q_{\mathrm{nat}}(U),K_{\mathrm{nat}}(U)
\in\mathbb R^{n\times d_{\mathrm{att}}}$
denote the native query and key features, and $d_{\mathrm{att}}$ is their dimension.
These functions include the native feature normalization and attention scaling.
The matrix $G\in\mathbb R^{n\times n}$ collects finite geometric biases that remain fixed under scalar-feature updates.

We allow a scalar-feature-only adapter to access the same pretrained representations $S,P$ and produce arbitrary updated scalar features $\widehat U$.
This adapter retains the native attention computation and adds no direct attention-logit bias.
Let
$\Pi_F=\operatorname{Softmax}(L(\widehat U))$
denote its attention weights, and let
$\Pi_{\mathrm{full}}=\operatorname{Softmax}(\widetilde L)$
denote the attention weights of PairFuse, with $\widetilde L$ defined in \eqref{eq:attention-fusion}.
\begin{theorem}[Expressivity of Pairwise Attention Fusion]
\label{thm:pairwise-expressivity}
Under the setting above, define
\begin{equation}
    H=I_n-\frac{1}{n}\mathbf 1\mathbf 1^\top,
    \qquad
    T=(\widetilde L-G)H,
    \label{eq:centered-attention-residual}
\end{equation}
where $I_n$ is the identity matrix and $\mathbf 1\in\mathbb R^n$ is the all-ones vector.
Every scalar-feature-only adapter satisfies
\begin{equation}
    \left\|
        \bigl(
            \log\Pi_{\mathrm{full}}-\log\Pi_F
        \bigr)H
    \right\|_F^2
    \geq
    \sum_{r>d_{\mathrm{att}}}\sigma_r(T)^2,
    \label{eq:pairwise-expressivity-bound}
\end{equation}
where $\log$ acts element-wise, $\|\cdot\|_F$ denotes the Frobenius norm, and $\sigma_r(T)$ is the $r$th largest singular value of $T$.
In particular, if
$\operatorname{rank}(T)>d_{\mathrm{att}}$,
no scalar-feature-only adapter reproduces
$\Pi_{\mathrm{full}}$ at this head and state.
\end{theorem}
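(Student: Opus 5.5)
The argument reduces the comparison of attention weights to a comparison of logit matrices modulo row-wise constants, and then invokes the Eckart--Young--Mirsky theorem.

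\emph{Step 1: remove the row normalizers.} Row-wise softmax satisfies $\log\operatorname{Softmax}(X)=X-\lambda(X)\mathbf 1^\top$, where $\lambda(X)\in\mathbb R^n$ collects the row log-partition functions. Since $\mathbf 1^\top H=0$, right-multiplication by $H$ annihilates $\lambda\mathbf 1^\top$. Hence
\begin{equation*}
\bigl(\log\Pi_{\mathrm{full}}-\log\Pi_F\bigr)H=\bigl(\widetilde L-L(\widehat U)\bigr)H .
\end{equation*}
All entries are finite, because $G$ is finite and all query--key pairs are allowed.

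\emph{Step 2: use the low-rank structure of the feature-only logits.} By \eqref{eq:native-logit-factorization},
\begin{equation*}
L(\widehat U)=Q_{\mathrm{nat}}(\widehat U)K_{\mathrm{nat}}(\widehat U)^\top+G,
\end{equation*}
where $G$ is the same fixed geometric bias that appears in the native logits of PairFuse, since it does not depend on the scalar features. Subtracting gives
\begin{equation*}
\bigl(\widetilde L-L(\widehat U)\bigr)H=T-M,\qquad M:=Q_{\mathrm{nat}}(\widehat U)K_{\mathrm{nat}}(\widehat U)^\top H .
\end{equation*}
The product of an $n\times d_{\mathrm{att}}$ factor, a $d_{\mathrm{att}}\times n$ factor and $H$ has rank at most $d_{\mathrm{att}}$, whatever $\widehat U$ the adapter chooses. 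Arbitrary access to $S,P$ affects only which rank-$d_{\mathrm{att}}$ matrix $M$ is obtained, not its rank.

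\emph{Step 3: apply Eckart--Young--Mirsky.} In Frobenius norm,
\begin{equation*}
\min_{\operatorname{rank}M\le d_{\mathrm{att}}}\|T-M\|_F^2=\sum_{r>d_{\mathrm{att}}}\sigma_r(T)^2 .
\end{equation*}
This yields \eqref{eq:pairwise-expressivity-bound}. For the final claim: if $\operatorname{rank}(T)>d_{\mathrm{att}}$, the right-hand side is strictly positive. Then $\log\Pi_{\mathrm{full}}\neq\log\Pi_F$, so $\Pi_F\neq\Pi_{\mathrm{full}}$.

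\emph{Main obstacle.} The step needing care is Step 2. We must confirm that $G$ really cancels, i.e.\ that it is identical in $\widetilde L$ and $L(\widehat U)$ at the fixed state and head. We must also confirm that the native normalization and scaling are fully absorbed into $Q_{\mathrm{nat}},K_{\mathrm{nat}}$, so that no additional feature-dependent term of higher rank survives. Both points follow from the setting fixed before the theorem, so I would state them explicitly at the start of the proof.
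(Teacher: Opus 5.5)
Your proposal is correct and follows essentially the same route as the paper's proof. Both proofs use the row-centering identity $(\log\operatorname{Softmax}(Z))H = ZH$ to reduce the comparison to $T - Q_{\mathrm{nat}}(\widehat U)K_{\mathrm{nat}}(\widehat U)^\top H$, whose subtracted term has rank at most $d_{\mathrm{att}}$ for any $\widehat U$, and then apply Eckart--Young to obtain both the bound and the rank-condition corollary.
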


\begin{proof}[Proof Sketch]
For any finite logit matrix $L$,
row-wise softmax satisfies
\[
    \bigl(\log\operatorname{Softmax}(L)\bigr)H=LH,
\]
because multiplication by $H$ removes row-wise constants.
Consequently,
\[
    \bigl(\log\Pi_{\mathrm{full}}-\log\Pi_F\bigr)H
    = T-Q_{\mathrm{nat}}(\widehat U)K_{\mathrm{nat}}(\widehat U)^\top H.
\]
The second term has rank at most $d_{\mathrm{att}}$.
The Eckart-Young theorem therefore gives
\eqref{eq:pairwise-expressivity-bound}.
A positive singular-value tail rules out identical attention weights.
We provide proof details in \cref{app:pairwise-expressivity}.
\end{proof}

\Cref{thm:pairwise-expressivity} shows that, under the stated rank condition, direct pairwise attention fusion expresses attention patterns that scalar-feature-only adapters cannot reproduce through fixed-width native query-key projections, even with access to the same $S,P$.
Scalar-feature fusion enriches block features, and pairwise attention fusion directly modulates the weights that aggregate scalar and vector values.
Together with \cref{thm:geometry-compatible-transfer}, these results motivate the joint design of PairFuse, combining additional attention expressivity with rotational equivariance.

\subsection{Training and Inference}
\label{sec:training-guidance}

We freeze the BoltzGen trunk and the pretrained AnewOmni
parameters $\theta$ and optimize only the fusion parameters
$\phi$.
Gradients propagate through the frozen denoiser to the adapters.
Let $\epsilon_h$ and $\epsilon_q$ denote the noise added to the
generated block features and coordinates, respectively.
We retain the native noise-prediction objective:
\begin{align}
    \mathcal L_{\mathrm{fusion}}(\phi)
    =
    \mathbb E
    \left[
        \frac{\lambda}{n_G}
        \left\|
            \epsilon_{\mathrm{aug},h}-\epsilon_h
        \right\|_F^2
        +
        \frac{1}{n_G}
        \left\|
            \epsilon_{\mathrm{aug},q}-\epsilon_q
        \right\|_F^2
    \right],
    \label{eq:fusion-training}
\end{align}
where $n_G$ is the number of generated blocks and $\lambda$
weights the latent-feature loss.
Predictions and noise targets are restricted to generated blocks.
Context rows of $z_t$ remain fixed, although their internal EPT
features participate in fusion.
No additional structural supervision is introduced.

At inference time, the learned fusion adapters are applied at
each EPT layer using the same operations as during training.
The PairFormer representations are cached and reused throughout
reverse diffusion.
The native sampling procedure and pretrained all-atom decoder
remain unchanged.

\section{Experimental Studies}
\label{sec:exp}
We evaluate PairFuse on antibody, peptide, and small-molecule binder design. Our goal is to assess whether pretrained all-atom representations can improve an independently pretrained geometric latent diffusion model through parameter-efficient representation fusion. We compare PairFuse against the original AnewOmni checkpoint and three adapted representation-fusion baselines. Additional experimental details are provided in \cref{app:experimental-details}.

\subsection{Experimental Setup}
\label{sec:experimental-setup}

\paragraph{Task and Benchmark.}
We evaluate HCDR3 design on RAbD~\citep{adolfbryfogle2018rabd}, peptide design on LNR~\citep{tsaban2022peptide}, and small-molecule design on CrossDocked2020~\citep{francoeur2020crossdocked}. The fixed evaluation sets contain 60 RAbD targets, 93 LNR targets, and a 100-target subset of CrossDocked2020, respectively.

\paragraph{Baselines.}
We compare against AnewOmni~\citep{kong2026anewomni}, LLM4GEN~\citep{liu2025llm4gen}, EnzyAdapter~\citep{song2025enzycontrol}, and IP-Adapter~\citep{ye2023ipadapter}. AnewOmni uses its original pretrained checkpoint without additional training. LLM4GEN and EnzyAdapter adapt the respective representation-fusion mechanisms to the AnewOmni backbone. LLM4GEN applies a single cross-attention module to update native conditioning before the unchanged denoiser. EnzyAdapter uses layerwise cross-attention, concatenation, and projection modules. IP-Adapter applies decoupled cross-attention at each EPT layer, using native queries to attend separately to projected PairFormer single representations. The donor scalar output is added before the shared frozen output projection. All fusion baselines are trained on the common biomolecular task. All trained representation-fusion methods are matched in trainable parameter count (approximately 4.03 million, with a maximum relative difference below 0.1\%), with details reported in \cref{tab:trainable-parameters}.

\paragraph{Evaluation.}
Following AnewOmni~\citep{kong2026anewomni}, we evaluate structural accuracy and amino-acid recovery for antibody and peptide design. We additionally assess interface quality through local-interface adaptations of DockQ and native-contact recall~\citep{basu2016dockq}, together with a backbone interface score based on lDDT~\citep{mariani2013lddt}. For small-molecule design, we report Vina score and Vina min~\citep{trott2010vina}, High Affinity~\citep{guan2023targetdiff}, QED~\citep{bickerton2012qed}, and Atom-type MAE~\citep{lin2024cbgbench}. Additional metrics, including steric clashes, interface RMSD, and Lipinski rule compliance~\citep{lipinski1997rules}, are reported in the appendix.

We use 30 fixed generation attempts per target. Native metrics pool finite candidate scores within each run, whereas interface and supplementary structural metrics first average finite candidates within each target and then weight targets equally. We select one checkpoint per trained run by minimizing the combined native validation loss on a 614-record validation set, weighting each modality by its number of generated blocks. The selected checkpoint is shared across all three modalities. Metric definitions and evaluation procedures are provided in \cref{app:inference-protocol}. PairFormer-representation construction and training details are provided in \cref{app:donor-construction,app:training-recipe}.

\subsection{Main Results}
\label{sec:trimodal-update}
\label{sec:lnr-results}

Tables~\ref{tab:main-rabd}, \ref{tab:main-lnr}, and
\ref{tab:main-crossdocked2020} compare PairFuse with the
pretrained AnewOmni model and three representation-fusion
baselines across antibody, peptide, and small-molecule
binder design. For each trained method, the reported results are the arithmetic mean of three seed-specific metric estimates, with equal weight assigned to each seed. AnewOmni uses its original pretrained checkpoint. The complete appendix comparison includes the same five methods.

\begin{table*}[!t]
\centering

\caption{
Antibody design results on RAbD (mean $\pm$ std over 3 random seeds).
The best and second-best mean metric values are shown in \textbf{bold} and \underline{underline}, respectively.
}
\label{tab:main-rabd}
\begingroup
\fontsize{9}{10.5}\selectfont
\setlength{\tabcolsep}{3pt}
\renewcommand{\arraystretch}{1.18}
\begin{tabular*}{\textwidth}{@{\extracolsep{\fill}}lccccc@{}}
\toprule
& \multicolumn{3}{c}{Structure and Sequence}
& \multicolumn{2}{c}{Interface Quality} \\
\cmidrule(lr){2-4}
\cmidrule(l){5-6}
Method
& Complex RMSD $\downarrow$
& RMSD $\leq 2$~\AA{} (\%) $\uparrow$
& AAR $\uparrow$
& DockQ $\uparrow$
& Contact $\uparrow$ \\
\midrule
{\fontsize{9}{10.5}\selectfont AnewOmni~\citep{kong2026anewomni}}
& {\fontsize{9}{10.5}\selectfont 2.5563}
& {\fontsize{9}{10.5}\selectfont 34.00}
& {\fontsize{9}{10.5}\selectfont 15.6755}
& {\fontsize{9}{10.5}\selectfont 0.6853}
& {\fontsize{9}{10.5}\selectfont 50.9592} \\
\midrule
{\fontsize{9}{10.5}\selectfont LLM4GEN~\citep{liu2025llm4gen}}
& {\fontsize{9}{10.5}\selectfont 2.5844$\pm$0.09}
& {\fontsize{9}{10.5}\selectfont 33.75$\pm$1.23}
& {\fontsize{9}{10.5}\selectfont \textbf{15.7987$\pm$0.24}}
& {\fontsize{9}{10.5}\selectfont 0.6837$\pm$0.00}
& {\fontsize{9}{10.5}\selectfont 50.9243$\pm$0.39} \\
{\fontsize{9}{10.5}\selectfont EnzyAdapter~\citep{song2025enzycontrol}}
& {\fontsize{9}{10.5}\selectfont \underline{2.5055$\pm$0.07}}
& {\fontsize{9}{10.5}\selectfont \underline{36.80$\pm$1.77}}
& {\fontsize{9}{10.5}\selectfont 15.6484$\pm$0.07}
& {\fontsize{9}{10.5}\selectfont \underline{0.6888$\pm$0.01}}
& {\fontsize{9}{10.5}\selectfont \underline{51.0443$\pm$0.46}} \\
{\fontsize{9}{10.5}\selectfont IP-Adapter~\citep{ye2023ipadapter}}
& {\fontsize{9}{10.5}\selectfont 2.5416$\pm$0.01}
& {\fontsize{9}{10.5}\selectfont 35.59$\pm$0.66}
& {\fontsize{9}{10.5}\selectfont 15.4042$\pm$0.17}
& {\fontsize{9}{10.5}\selectfont 0.6854$\pm$0.00}
& {\fontsize{9}{10.5}\selectfont 50.7063$\pm$0.23} \\
{\fontsize{9}{10.5}\selectfont \textbf{PairFuse (Ours)}}
& {\fontsize{9}{10.5}\selectfont \textbf{2.3748}$\pm$\textbf{0.00}}
& {\fontsize{9}{10.5}\selectfont \textbf{41.61}$\pm$\textbf{0.53}}
& {\fontsize{9}{10.5}\selectfont \underline{15.6882$\pm$0.08}}
& {\fontsize{9}{10.5}\selectfont \textbf{0.7019}$\pm$\textbf{0.00}}
& {\fontsize{9}{10.5}\selectfont \textbf{52.4309}$\pm$\textbf{0.18}} \\
\bottomrule
\end{tabular*}
\endgroup
\end{table*}

\paragraph{Antibody Design.}
On RAbD, PairFuse achieves the lowest mean complex RMSD and the
highest mean DockQ and native-contact recall.
Relative to AnewOmni, complex RMSD decreases from 2.5563 to
2.3748~\AA{}, DockQ increases from 0.6853 to 0.7019, and
native-contact recall improves from 50.9592\% to 52.4309\%.
On RAbD, PairFuse increases the fraction of designs with receptor-aligned HCDR3 C$\alpha$ RMSD $\leq 2$~\AA{} from 34.00\% to 41.61\%, a gain of 7.61 percentage points (22.4\% relative).
EnzyAdapter also improves these structural and interface metrics
over AnewOmni, but by smaller margins.
LLM4GEN achieves the highest amino-acid recovery (AAR),
closely followed by PairFuse.

\begin{table*}[!t]
\centering
\caption{
Peptide design on LNR (mean $\pm$ std over 3 random seeds).
The best and second-best mean metric values are shown in \textbf{bold} and \underline{underline}, respectively.
}
\label{tab:main-lnr}
\begingroup
\fontsize{9}{10.5}\selectfont
\setlength{\tabcolsep}{4pt}
\renewcommand{\arraystretch}{1.18}
\begin{tabular*}{\textwidth}{@{\extracolsep{\fill}}lccccc@{}}
\toprule
& \multicolumn{2}{c}{Structure and Sequence}
& \multicolumn{3}{c}{Interface Quality} \\
\cmidrule(lr){2-3}
\cmidrule(l){4-6}
Method
& Complex RMSD $\downarrow$
& AAR $\uparrow$
& DockQ $\uparrow$
& Contact $\uparrow$
& lDDT $\uparrow$ \\
\midrule
{\fontsize{9}{10.5}\selectfont AnewOmni~\citep{kong2026anewomni}}
& {\fontsize{9}{10.5}\selectfont 8.5619}
& {\fontsize{9}{10.5}\selectfont 6.4306}
& {\fontsize{9}{10.5}\selectfont 0.4280}
& {\fontsize{9}{10.5}\selectfont 33.1855}
& {\fontsize{9}{10.5}\selectfont 0.4197} \\
\midrule
{\fontsize{9}{10.5}\selectfont LLM4GEN~\citep{liu2025llm4gen}}
& {\fontsize{9}{10.5}\selectfont \underline{8.5006$\pm$0.05}}
& {\fontsize{9}{10.5}\selectfont \underline{6.4805$\pm$0.04}}
& {\fontsize{9}{10.5}\selectfont \underline{0.4302$\pm$0.00}}
& {\fontsize{9}{10.5}\selectfont \underline{33.2962$\pm$0.08}}
& {\fontsize{9}{10.5}\selectfont \underline{0.4213$\pm$0.00}} \\
{\fontsize{9}{10.5}\selectfont EnzyAdapter~\citep{song2025enzycontrol}}
& {\fontsize{9}{10.5}\selectfont 8.5859$\pm$0.04}
& {\fontsize{9}{10.5}\selectfont 6.4383$\pm$0.06}
& {\fontsize{9}{10.5}\selectfont 0.4272$\pm$0.00}
& {\fontsize{9}{10.5}\selectfont 32.9352$\pm$0.03}
& {\fontsize{9}{10.5}\selectfont 0.4193$\pm$0.00} \\
{\fontsize{9}{10.5}\selectfont IP-Adapter~\citep{ye2023ipadapter}}
& {\fontsize{9}{10.5}\selectfont 8.5664$\pm$0.05}
& {\fontsize{9}{10.5}\selectfont \textbf{6.5084}$\pm$\textbf{0.00}}
& {\fontsize{9}{10.5}\selectfont 0.4248$\pm$0.00}
& {\fontsize{9}{10.5}\selectfont 32.7373$\pm$0.23}
& {\fontsize{9}{10.5}\selectfont 0.4169$\pm$0.00} \\
{\fontsize{9}{10.5}\selectfont \textbf{PairFuse (Ours)}}
& {\fontsize{9}{10.5}\selectfont \textbf{8.2821}$\pm$\textbf{0.05}}
& {\fontsize{9}{10.5}\selectfont 6.3436$\pm$0.02}
& {\fontsize{9}{10.5}\selectfont \textbf{0.4383}$\pm$\textbf{0.00}}
& {\fontsize{9}{10.5}\selectfont \textbf{34.0958}$\pm$\textbf{0.15}}
& {\fontsize{9}{10.5}\selectfont \textbf{0.4291}$\pm$\textbf{0.00}} \\
\bottomrule
\end{tabular*}
\endgroup
\end{table*}

\paragraph{Peptide Design.}
On LNR, PairFuse achieves the lowest mean complex RMSD and the
highest mean scores on all three interface-quality metrics.
Relative to AnewOmni, complex RMSD decreases from 8.5619 to
8.2821~\AA{}, and DockQ increases from 0.4280 to 0.4383.
Native-contact recall also improves from 33.1855\% to
34.0958\%, accompanied by higher backbone interface lDDT.
These improvements are accompanied by a small decrease in AAR
from 6.4306\% to 6.3436\%, with IP-Adapter achieving the
highest sequence recovery (6.5084\%).

\begin{table*}[!t]
\centering
\caption{
Small-molecule design on CrossDocked2020 (mean $\pm$ std over 3 random seeds).
The best and second-best mean metric values are shown in \textbf{bold} and \underline{underline}, respectively.
}
\label{tab:main-crossdocked2020}
\begingroup
\fontsize{9}{10.5}\selectfont
\setlength{\tabcolsep}{3pt}
\renewcommand{\arraystretch}{1.18}
\begin{tabular*}{\textwidth}{@{\extracolsep{\fill}}l@{\hspace{2pt}}c@{\hspace{2pt}}c@{\hspace{2pt}}c@{\hspace{2pt}}c@{\hspace{2pt}}c@{}}
\toprule
& \multicolumn{3}{c}{Docking}
& \multicolumn{2}{c}{Chemical Quality} \\
\cmidrule(lr){2-4}
\cmidrule(l){5-6}
Method
& Vina score $\downarrow$
& Vina min $\downarrow$
& High Affinity (\%) $\uparrow$
& QED $\uparrow$
& Atom-type MAE $\downarrow$ \\
\midrule
AnewOmni~\citep{kong2026anewomni}
& \underline{-5.1279}
& \underline{-5.9715}
& 44.8174
& 0.5745
& 0.4188 \\
\midrule
LLM4GEN~\citep{liu2025llm4gen}
& -5.0685$\pm$0.04
& -5.9298$\pm$0.04
& 43.7900$\pm$1.11
& \textbf{0.5765$\pm$0.00}
& \underline{0.4153$\pm$0.04} \\
EnzyAdapter~\citep{song2025enzycontrol}
& -5.1230$\pm$0.05
& -5.9478$\pm$0.04
& \underline{45.8629$\pm$0.92}
& \underline{0.5753$\pm$0.00}
& 0.4404$\pm$0.02 \\
IP-Adapter~\citep{ye2023ipadapter}
& -5.1040$\pm$0.03
& -5.9130$\pm$0.03
& 44.4766$\pm$1.12
& 0.5675$\pm$0.00
& 0.4551$\pm$0.05 \\
\textbf{PairFuse (Ours)}
& \textbf{-5.1517}$\pm$\textbf{0.07}
& \textbf{-5.9841}$\pm$\textbf{0.03}
& \textbf{46.8298}$\pm$\textbf{1.19}
& 0.5721$\pm$0.00
& \textbf{0.3775}$\pm$\textbf{0.03} \\
\bottomrule
\end{tabular*}
\endgroup
\end{table*}
\paragraph{Small-Molecule Design.}
On CrossDocked2020, PairFuse achieves the lowest mean Vina score
and Vina min, the highest High Affinity, and the lowest
Atom-type MAE among the compared methods.
Relative to AnewOmni, Vina score decreases from -5.1279 to
-5.1517~kcal/mol, Vina min decreases from -5.9715 to
-5.9841~kcal/mol, High Affinity increases from 44.8174\% to
46.8298\%, and Atom-type MAE decreases from 0.4188 to 0.3775.
These improvements combine more favorable docking scores
with closer agreement to the reference atom-type composition,
while maintaining a QED of 0.5721.

\paragraph{Overall Results.}
Across antibody and peptide design, PairFuse consistently
achieves the lowest mean complex RMSD and the highest mean
scores on all three interface-quality metrics.
For small-molecule design, PairFuse achieves favorable mean
docking scores with broadly comparable performance on other
chemical-quality metrics.
These results support the effectiveness of parameter-efficient
fusion of pretrained all-atom representations for unified
binder design.
Detailed implementation settings, including computational resources, training hyperparameters, checkpoint selection, and inference, are provided in \cref{app:experimental-details}.

  \subsection{Ablation Studies and Implementation Details}

We examine whether transferring pretrained pair representations directly into native attention improves interface quality beyond scalar-feature fusion.
We compare Full PairFuse with two controls.
Scalar Fusion retains scalar-feature fusion, including its pair conditioning, but removes the additive conditioning of native attention logits.
Scalar Fusion $+$ Cross-Attention Bias additionally retains the single-conditioned term in \eqref{eq:attention-fusion}, omitting only the direct pair contribution. All variants receive the same pretrained representations $(S,P)$ and use the same frozen backbone, training data, twenty-hour training budget, and validation-based checkpoint-selection rule. We evaluate training seed 42 on 93 peptide and 60 antibody targets with 30 generation attempts per target, averaging candidate scores within each target before weighting targets equally.

Table~\ref{tab:pairwise-fusion-main} shows that Full PairFuse achieves higher scores on all three interface metrics and lower complex RMSD than both controls in both tasks. Compared with Scalar Fusion $+$ Cross-Attention Bias (abbreviated as Scalar Fusion $+$ Bias in the table), it achieves DockQ of 0.4397 versus 0.4329, contact recall of 34.2175\% versus 33.5094\%, and backbone interface lDDT of 0.4309 versus 0.4233 on peptide design. The corresponding antibody scores are 0.7024 versus 0.6849, 52.4474\% versus 50.8340\%, and 0.7017 versus 0.6848, respectively. Complex RMSD decreases from 8.3991 to 8.2543~\AA{} on peptide design and from 2.5625 to 2.3704~\AA{} on antibody design. For the evaluated checkpoints, all six paired improvements in the three interface metrics pooled across peptide and antibody targets have pointwise 95\% target-bootstrap confidence intervals above zero. These results support an additional benefit of transferring pretrained pair representations directly into native attention beyond their use in scalar-feature fusion.

Additional ablations on single and pair representations and computational-cost analysis are reported in \cref{app:ablations}.

\begin{table*}[!t]
\centering
\caption{Pairwise attention fusion ablation on peptide and antibody design (training seed 42).}
\label{tab:pairwise-fusion-main}
\begingroup
\fontsize{9}{10.5}\selectfont
\setlength{\tabcolsep}{3pt}
\renewcommand{\arraystretch}{1.18}
\begin{tabular*}{\textwidth}{@{\extracolsep{\fill}}lcccccccc@{}}
\toprule
& \multicolumn{4}{c}{Peptide design (LNR)}
& \multicolumn{4}{c}{Antibody design (RAbD)} \\
\cmidrule(lr){2-5}
\cmidrule(l){6-9}
Method
& Complex RMSD $\downarrow$
& DockQ $\uparrow$
& Contact $\uparrow$
& lDDT $\uparrow$
& Complex RMSD $\downarrow$
& DockQ $\uparrow$
& Contact $\uparrow$
& lDDT $\uparrow$ \\
\midrule
Scalar Fusion
& 8.4545 & 0.4302 & 33.2426 & 0.4208
& \underline{2.5114} & \underline{0.6895} & \underline{51.5896} & \underline{0.6887} \\
Scalar Fusion $+$ Bias
& \underline{8.3991} & \underline{0.4329} & \underline{33.5094} & \underline{0.4233}
& 2.5625 & 0.6849 & 50.8340 & 0.6848 \\
\textbf{PairFuse (Ours)}
& \textbf{8.2543} & \textbf{0.4397} & \textbf{34.2175} & \textbf{0.4309}
& \textbf{2.3704} & \textbf{0.7024} & \textbf{52.4474} & \textbf{0.7017} \\
\bottomrule
\end{tabular*}
\endgroup
\end{table*}

\section{Discussion and Conclusion}
\label{sec:discussion}

In this work, we introduce \textbf{PairFuse}, a parameter-efficient framework for extending pretrained all-atom representations to unified binder design.
PairFuse integrates pretrained PairFormer single and pair representations into the evolving invariant block features and native geometric attention of a geometric latent diffusion model, while keeping both pretrained models frozen.
Theoretically, we prove that PairFuse preserves rotational equivariance and characterize the additional expressivity of pairwise attention fusion under a stated rank condition
(\cref{thm:geometry-compatible-transfer,thm:pairwise-expressivity}).
Experimentally, PairFuse achieves the lowest mean complex RMSD and the highest mean interface-quality scores on antibody and peptide benchmarks.
For small-molecule design, it achieves favorable docking and atom-type metrics, with mixed results on other chemical-quality metrics
(\cref{sec:lnr-results}).
A limitation is that unknown generated molecular structures, particularly small-molecule ligands, lack direct PairFormer token correspondences, which may affect representation-transfer effectiveness.
Future work will explore representation-fusion mechanisms that better accommodate unknown molecular structures and additional pretrained all-atom models.

\newpage
\appendix
\label{sec:append}
\part*{Appendix}
{
\setlength{\parskip}{-0em}
\startcontents[sections]
\printcontents[sections]{ }{1}{}
}
\clearpage

\section{Related Work}
\label{sec:related-work}

Our approach connects unified biomolecular generation with pretrained all-atom structural representations. We review related work in molecular design, equivariant geometric modeling, and lightweight adaptation to place this connection in context.

\paragraph{Protein and Peptide Design.}
Deep generative models have been applied to protein sequence design, structure generation, and binder optimization.
ProteinMPNN performs sequence design conditioned on protein backbones
~\citep{dauparas2022proteinmpnn}.
RFdiffusion, Chroma, and FrameDiff generate protein structures through geometric diffusion or related generative processes
~\citep{watson2023rfdiffusion,ingraham2023chroma,yim2023framediff}.
DiffAb studies antigen-conditioned antibody sequence--structure design
~\citep{luo2022diffab}.
PepGLAD and PepFlow address full-atom peptide generation through geometric latent diffusion and multimodal flow matching
~\citep{kong2024pepglad,li2024pepflow}.
Structure predictors have also been incorporated into peptide docking and binder optimization pipelines
~\citep{tsaban2022peptide,pacesa2025bindcraft}.
These advances motivate the use of pretrained structural representations for unified binder design. PairFuse integrates such representations into a geometric latent diffusion model that generates binders.

\paragraph{Unified Biomolecular Generation.}
Recent work seeks broader generative interfaces across heterogeneous molecular classes.
UniMoMo and AnewOmni represent different molecular entities through shared block-level latent states
~\citep{kong2025unimomo,kong2026anewomni}.
PocketXMol adopts atom-level representations for molecular interaction modeling, while ODesign studies broad all-atom biomolecular interaction design
~\citep{peng2024pocketxmol,zhang2025odesign}.
La-Proteina combines explicit backbone geometry with latent representations of sequence and side-chain structure
~\citep{geffner2025laproteina}.
Proteina-Complexa extends atomistic generative modeling to protein binder design with large-scale pretraining and test-time computation
~\citep{didi2026proteinacomplexa}.
PairFuse connects these approaches by incorporating pretrained all-atom representations into the shared latent space of AnewOmni through layerwise fusion.

\paragraph{All-Atom Biomolecular Models.}
Recent all-atom biomolecular models maintain token-level and pairwise representations before structure prediction or generation.
AlphaFold introduced learned representations for high-accuracy protein structure prediction
~\citep{jumper2021alphafold}.
RoseTTAFold All-Atom and AlphaFold 3 extend structural modeling across proteins, nucleic acids, ligands, and other molecular entities
~\citep{krishna2024rosettafoldallatom,abramson2024alphafold3}.
Boltz-2 further combines biomolecular structure modeling with binding-affinity prediction
~\citep{passaro2025boltz2}.
Recent generative models such as RFdiffusion3 and BoltzGen perform all-atom biomolecular generation
~\citep{butcher2025rfdiffusion3,stark2025boltzgen}.
We extract single and pair representations from the frozen BoltzGen PairFormer and integrate them into the AnewOmni denoiser. These representations provide structural information about individual molecular blocks and their interactions.

\paragraph{Equivariant Geometric Models.}
Geometric neural networks incorporate rotational and translational symmetry through equivariant attention, message passing, and vector-valued representations
~\citep{fuchs2020se3transformer,satorras2021egnn,jing2021gvp}.
The Generalist Equivariant Transformer extends these ideas to molecular interaction learning across heterogeneous chemical systems
~\citep{kong2024get}.
AnewOmni similarly combines invariant latent features with equivariant three-dimensional coordinates
~\citep{kong2026anewomni}.
PairFuse integrates all-atom representations into invariant scalar features and pairwise attention while retaining the native equivariant vector operations. We establish that this layerwise fusion preserves rotational equivariance under rotation-invariant transferred representations.

\paragraph{Lightweight Model Adaptation.}
Parameter-efficient adaptation has been studied through trainable bottleneck adapters and low-rank weight updates
~\citep{houlsby2019adapters,hu2022lora}.
Frozen pretrained models have also been connected through compact learned interfaces in multimodal systems such as Flamingo and BLIP-2
~\citep{alayrac2022flamingo,li2023blip2}.
For diffusion models, ControlNet and T2I-Adapter inject additional structural conditions into pretrained generators
~\citep{zhang2023controlnet,mou2024t2iadapter},
while IP-Adapter transfers external representations through cross-attention
~\citep{ye2023ipadapter}.
We extend this approach to unified biomolecular binder design by connecting a pretrained all-atom model with a geometric latent diffusion model. Lightweight adapters transfer PairFormer representations through layerwise scalar residuals and attention biases, allowing both pretrained backbones to remain frozen.

\clearpage

\section{Proof of Theorem~\ref{thm:geometry-compatible-transfer}}
\label{app:geometry-proof}

\begin{theorem}[Theorem~\ref{thm:geometry-compatible-transfer} Restated: Geometric Compatibility]
Suppose the native conditioning features $a$, the transferred
representations $S,P$, and the correspondence mask $m$ are
rotation invariant.
Assume that the native input transformations and output heads
preserve scalar invariance and vector equivariance, and that each
EPT layer preserves these properties for invariant attention
weights.
Then, for any fusion parameters $\phi$ and every
$R\in\mathrm{SO}(3)$, the augmented denoiser satisfies
\begin{align*}
    \epsilon_{\mathrm{aug},h}(R\cdot z_t,t|a,S,P)
    &=
    \epsilon_{\mathrm{aug},h}(z_t,t|a,S,P),
    \\
    \epsilon_{\mathrm{aug},q}(R\cdot z_t,t|a,S,P)
    &=
    R\cdot\epsilon_{\mathrm{aug},q}(z_t,t|a,S,P).
\end{align*}
Here $R\cdot z_t$ rotates generated and context coordinates
jointly, together with any native geometric prompts omitted
from the notation.
\end{theorem}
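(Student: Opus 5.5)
We argue by induction over the EPT layers, carrying the invariant
\[
U^{(\ell)}(R\cdot z_t)=U^{(\ell)}(z_t),
\qquad
\vec V^{(\ell)}(R\cdot z_t)=R\cdot\vec V^{(\ell)}(z_t)
\]
for the augmented network. Here $U^{(\ell)},\vec V^{(\ell)}$ denote the scalar and vector features after the $\ell$th augmented layer.

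\textbf{Base case.} The inputs are $h_t$ (unchanged by $R$), $a$ (invariant by hypothesis) and $t$. Hence $U^{\mathrm{in}}=\operatorname{InputEmbedding}_\theta(h_t,a,t)$ is invariant. The hypothesis on the native input transformations then gives that $\operatorname{GraphInitialization}_\theta(U^{\mathrm{in}},qR^\top)$ returns invariant $U^{(0)}$ and equivariant $\vec V^{(0)}$. Any omitted geometric prompts are rotated jointly with $q_t$, so the same hypothesis covers them.

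\textbf{Inductive step.} Assume $U=U^{(\ell)}$ is invariant and $\vec V^{(\ell)}$ is equivariant. The argument follows the data flow of the augmented layer.

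First, the scalar-feature fusion. The encoder output $\bar S=f_s(S)$ depends only on $S$, so it is invariant. The projections $Q=UW_Q$, $K=\bar SW_K$, $V=\bar SW_V$ are therefore invariant. The pair bias $b(P)$ and the mask $m$ are invariant by hypothesis. So the logits in \eqref{eq:feature-attention} are invariant. The masked softmax is a fixed deterministic function of these logits and of $m$, hence $A$ is invariant. It follows that $\Delta U$ in \eqref{eq:feature-fusion} and $\widetilde U=U+\Delta U$ are invariant.

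Second, the pairwise attention fusion. The term $\Delta L$ in \eqref{eq:attention-fusion} is assembled from $U$, $\operatorname{LN}(S)$, $f_p(P)$ and $m$ using only linear maps, products and layer normalization. None of these inputs involves coordinates, so $\Delta L$ is invariant.

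Third, the native logits. By the stated transformation properties of EPT, the native logits $L(\cdot)$ are invariant whenever their scalar input is invariant. The geometric biases in $L$ depend on coordinates only through rotation-invariant quantities, which is exactly the property guaranteed by the EPT structure we are assuming. So $L(\widetilde U)$ is invariant, and therefore so are $\widetilde L=L(\widetilde U)+\Delta L$ and $\operatorname{Softmax}(\widetilde L)$.

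Finally, the layer output. The layer applies the original EPT update to the inputs $(\widetilde U,\vec V^{(\ell)})$ with invariant attention weights. By hypothesis, each EPT layer preserves scalar invariance and vector equivariance for invariant attention weights. The update therefore yields invariant $U^{(\ell+1)}$ and equivariant $\vec V^{(\ell+1)}$. This holds for arbitrary $\phi$, because we never used any property of the adapter weights: each adapter is a function of invariant quantities only.

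\textbf{Conclusion.} After the last layer, the hypothesis on the native output heads gives invariant $\epsilon_{\mathrm{aug},h}$ and equivariant $\epsilon_{\mathrm{aug},q}$, which are \eqref{eq:augmented-invariance} and \eqref{eq:augmented-equivariance}.

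The main point requiring care is the third stage of the inductive step: we must make precise that the EPT hypothesis ``preserves these properties for invariant attention weights'' applies even when the weights come from externally modified logits $\widetilde L$ rather than the native $L(U)$. It likewise must apply when the scalar input is $\widetilde U$ rather than $U$. To handle this, we read the hypothesis as a statement about the layer viewed as a map
\[
(\widetilde U,\vec V,\Pi)\mapsto(U',\vec V'),
\]
which is invariant/equivariant whenever $\widetilde U$ and $\Pi$ are invariant and $\vec V$ is equivariant. The fusion-generated inputs then plug directly into this map, and the argument above goes through.
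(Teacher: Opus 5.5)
Your proposal is correct and follows essentially the same layerwise induction as the paper. Both arguments show that $\Delta U$, $\widetilde U$, $\Delta L$ and $L(\widetilde U)$ are invariant because they are built from rotation-invariant inputs, then pass the resulting invariant attention weights through the native equivariant updates and output heads. The only difference is cosmetic: the paper explicitly checks that attention aggregation of vector values commutes with rotation, $\operatorname{Softmax}(\widetilde L)\bigl((R\cdot\vec V)W_v\bigr)=R\cdot\bigl[\operatorname{Softmax}(\widetilde L)(\vec V W_v)\bigr]$, whereas you absorb this step into your reading of the layer hypothesis as a map $(\widetilde U,\vec V,\Pi)\mapsto(U',\vec V')$.
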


\begin{proof}
Fix a diffusion step $t$, a rotation $R\in\mathrm{SO}(3)$,
and arbitrary fusion parameters $\phi$.
We compare the denoiser evaluations at $z_t=(h_t,q_t)$ and
$R\cdot z_t=(h_t,q_tR^\top)$, using the same parameters and
rotation-invariant inputs $a,S,P,m$.
We prove by induction that the internal features satisfy
\begin{align}
    U^{(\ell)}\mapsto U^{(\ell)},
    \qquad
    \vec V^{(\ell)}\mapsto R\cdot\vec V^{(\ell)}.
    \label{eq:proof-feature-transformation}
\end{align}
Layer indices are restored for the induction, while
attention-head indices remain omitted.

\paragraph{Initialization.}
The joint rotation leaves $h_t$, $a$, and $t$ unchanged.
The native input embedding
\begin{align*}
    U^{\mathrm{in}}
    =
    \operatorname{InputEmbedding}_{\theta}(h_t,a,t)
\end{align*}
is therefore invariant.
The native geometric initialization uses the coordinates $q_t$
to construct scalar and vector features.
Pairwise distances remain unchanged under rotation, while
relative coordinates rotate consistently with the inputs.
By the assumed equivariance of the native initialization,
\begin{align*}
    \operatorname{GraphInitialization}_{\theta}
    (U^{\mathrm{in}},q_tR^\top)
    =
    (U^{(0)},R\cdot\vec V^{(0)}).
\end{align*}
Thus, \eqref{eq:proof-feature-transformation} holds at layer zero.

\paragraph{Inductive Step.}
Suppose \eqref{eq:proof-feature-transformation} holds at the
input to layer $\ell$.
Write $U=U^{(\ell)}$ and $\vec V=\vec V^{(\ell)}$ within this
layer, following \cref{sec:representation-transfer}.

First, consider scalar-feature fusion.
Since $S$ is invariant, the encoded representations
$\bar S=f_s(S)$ and the reader keys and values are invariant.
The queries depend only on the invariant scalar features $U$.
The pair bias $b(P)$ is also invariant, and the unchanged mask
$m$ selects the same available keys and pairs.
Consequently, the cross-attention weights satisfy
\begin{align*}
    A
    =
    \operatorname{MaskedSoftmax}_m
    \left(\frac{QK^\top}{\sqrt{d_h}}+b(P)\right)
    \mapsto A.
\end{align*}
Both the cross-attention output and the masked direct projection
in \eqref{eq:feature-fusion} are therefore invariant.
Concatenating heads and applying scalar output projections
preserve this property, giving
\begin{align}
    \Delta U&\mapsto\Delta U,
    \qquad
    \widetilde U=U+\Delta U\mapsto\widetilde U.
    \label{eq:proof-scalar-invariance}
\end{align}

Next, consider pairwise attention fusion.
The native logits $L(\widetilde U)$ use the enhanced scalar
features, rotation-invariant geometric biases, and the unchanged
native attention mask.
They remain invariant because $\widetilde U$ is invariant.
Each entry $\Delta L_{ij}$ in \eqref{eq:attention-fusion}
depends only on $U_i$, $S_j$, $P_{ij}$, and the coverage factors
$m_j$ and $m_i m_j$.
All these inputs are invariant, and the encoders, layer
normalization, and projections act on scalar features.
Thus, both terms in the added bias are invariant.
This argument uses the pre-fusion features $U$, as required by
the definition of $\Delta L$.
It follows that
\begin{align}
    \Delta L&\mapsto\Delta L,
    \qquad
    \widetilde L=L(\widetilde U)+\Delta L
    \mapsto\widetilde L.
    \label{eq:proof-attention-invariance}
\end{align}

The augmented attention weights
$\operatorname{Softmax}(\widetilde L)$ are therefore invariant.
The scalar attended features remain invariant since they combine
these weights with scalar values derived from $\widetilde U$.
For vector values, the native projection $W_v$ acts on feature
channels, while attention aggregates over block indices.
Both operations commute with rotation of the spatial axis:
\begin{align*}
    \operatorname{Softmax}(\widetilde L)
    \left((R\cdot\vec V)W_v\right)
    =
    R\cdot
    \left[
        \operatorname{Softmax}(\widetilde L)(\vec V W_v)
    \right].
\end{align*}
Thus, the scalar and vector attention outputs preserve their
respective transformation laws.
The argument applies to each head separately.
Native head aggregation and residual connections preserve these
properties, as do the geometric feed-forward and conditioning
updates by assumption.
Therefore,
\begin{align*}
    U^{(\ell+1)}\mapsto U^{(\ell+1)},
    \qquad
    \vec V^{(\ell+1)}\mapsto R\cdot\vec V^{(\ell+1)},
\end{align*}
which completes the inductive step.

\paragraph{Output Prediction.}
By induction, the final scalar features $U^{(L)}$ are invariant
and the vector features $\vec V^{(L)}$ are equivariant.
The native output heads preserve these properties by assumption.
Selecting the generated-block rows also commutes with rotation,
since the generated-block identities do not change.
Consequently,
\begin{align*}
    \epsilon_{\mathrm{aug},h}(R\cdot z_t,t|a,S,P)
    &=
    \epsilon_{\mathrm{aug},h}(z_t,t|a,S,P),
    \\
    \epsilon_{\mathrm{aug},q}(R\cdot z_t,t|a,S,P)
    &=
    R\cdot\epsilon_{\mathrm{aug},q}(z_t,t|a,S,P).
\end{align*}
The argument holds for arbitrary fusion parameters $\phi$,
which establishes the claimed invariance and equivariance.
\end{proof}

\clearpage

\section{Proof of Theorem~\ref{thm:pairwise-expressivity}}
\label{app:pairwise-expressivity}

\begin{theorem}[Theorem~\ref{thm:pairwise-expressivity} Restated: Expressivity of Pairwise Attention Fusion]
Fix a diffusion state, an EPT layer, and one native attention
head with query--key dimension $d_{\mathrm{att}}$.
Consider $n$ valid blocks and allow every query--key pair.
Suppose the native logits take the form
\begin{align*}
    L(U)
    =
    Q_{\mathrm{nat}}(U)
    K_{\mathrm{nat}}(U)^\top+G,
\end{align*}
where
$Q_{\mathrm{nat}}(U),K_{\mathrm{nat}}(U)
\in\mathbb R^{n\times d_{\mathrm{att}}}$
include the native normalization and scaling.
Assume that the finite geometric bias matrix
$G\in\mathbb R^{n\times n}$
remains fixed under scalar-feature updates.

A scalar-feature-only adapter may access both $S$ and $P$
and produce arbitrary updated scalar features $\widehat U$,
but adds no direct bias to the native attention logits.
Let
\begin{align*}
    \Pi_F
    &=
    \operatorname{Softmax}(L(\widehat U)),
    \\
    \Pi_{\mathrm{full}}
    &=
    \operatorname{Softmax}(\widetilde L),
\end{align*}
where $\widetilde L$ denotes the PairFuse logits in
\eqref{eq:attention-fusion}.
Both softmax operations act row-wise, and all logits are finite.
Define
\begin{align*}
    H
    =
    I_n-\frac{1}{n}\mathbf 1\mathbf 1^\top,
    \qquad
    T=(\widetilde L-G)H,
\end{align*}
where $I_n$ is the identity matrix and
$\mathbf 1\in\mathbb R^n$ is the all-ones vector.
Then every scalar-feature-only adapter satisfies
\begin{align*}
    \left\|
        (\log\Pi_{\mathrm{full}}-\log\Pi_F)H
    \right\|_F^2
    \geq
    \sum_{r>d_{\mathrm{att}}}\sigma_r(T)^2,
\end{align*}
where $\log$ acts element-wise, $\|\cdot\|_F$ denotes the
Frobenius norm, and $\sigma_r(T)$ denotes the $r$th largest
singular value of $T$.
If $\operatorname{rank}(T)>d_{\mathrm{att}}$,
no scalar-feature-only adapter reproduces
$\Pi_{\mathrm{full}}$ at this head and state.
\end{theorem}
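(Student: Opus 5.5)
The plan is to pass from attention weights to centered logits, and then apply a low-rank approximation bound.

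\emph{Step 1 (centering identity).} For any finite $L\in\mathbb R^{n\times n}$, the row-wise softmax has entries $\Pi_{ij}=\exp(L_{ij})/Z_i$ with $Z_i>0$. Hence $\log\Pi = L-c\mathbf 1^\top$, where $c_i=\log Z_i$. Since $\mathbf 1^\top H=0$, we get $(\log\operatorname{Softmax}(L))H=LH$.

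\emph{Step 2 (reduce to a rank bound).} Apply Step 1 to $\widetilde L$ and to $L(\widehat U)=Q_{\mathrm{nat}}(\widehat U)K_{\mathrm{nat}}(\widehat U)^\top+G$. Both are finite by assumption. This gives
\[
(\log\Pi_{\mathrm{full}}-\log\Pi_F)H=(\widetilde L-G)H-Q_{\mathrm{nat}}(\widehat U)K_{\mathrm{nat}}(\widehat U)^\top H = T-M,
\]
with $M:=Q_{\mathrm{nat}}(\widehat U)K_{\mathrm{nat}}(\widehat U)^\top H$. Here we use that $G$ is the same fixed matrix in both logit matrices, as the setting stipulates. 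Because $M$ factors through an $n\times d_{\mathrm{att}}$ matrix, $\operatorname{rank}(M)\le d_{\mathrm{att}}$. This holds regardless of how $\widehat U$ was produced from $S,P$, so the argument covers every scalar-feature-only adapter.

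\emph{Step 3 (Eckart--Young).} For any $M$ with $\operatorname{rank}(M)\le k$, the Eckart--Young--Mirsky theorem in Frobenius norm gives $\|T-M\|_F^2\ge\sum_{r>k}\sigma_r(T)^2$. Taking $k=d_{\mathrm{att}}$ yields the stated bound. For the final claim, suppose $\operatorname{rank}(T)>d_{\mathrm{att}}$. Then $\sigma_{d_{\mathrm{att}}+1}(T)>0$, so the right-hand side is strictly positive. If $\Pi_F=\Pi_{\mathrm{full}}$, both matrices have strictly positive entries because the logits are finite, so their entrywise logs agree and the left-hand side is zero. That is a contradiction.

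The argument is essentially routine. The only point needing care is Step 2: the fixed-$G$ assumption must be genuine, and the scaling and normalization must be absorbed into $Q_{\mathrm{nat}},K_{\mathrm{nat}}$, so that the feature-dependent part of the logits really has the factorized low-rank form for arbitrary $\widehat U$. Once that is granted, the rest follows from Steps 1 and 3.
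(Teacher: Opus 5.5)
Your proposal is correct and follows the paper's proof essentially step for step: the row-centering identity $(\log\operatorname{Softmax}(Z))H=ZH$, the reduction to $T-\widehat Q\widehat K^\top H$ with $\operatorname{rank}\le d_{\mathrm{att}}$, and the Eckart--Young bound followed by the positivity contradiction. The paper also extends the same argument to a comparison that keeps the single-conditioned bias, giving the threshold $d_{\mathrm{att}}+d_b$, but that goes beyond the stated theorem.
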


\begin{proof}
Fix a diffusion state, an EPT layer, a native attention head,
and an arbitrary scalar-feature-only adapter.
We compare its attention weights with those of PairFuse,
using the same native projections and geometric bias $G$.
Write
\begin{align*}
    \widehat Q
    =
    Q_{\mathrm{nat}}(\widehat U),
    \qquad
    \widehat K
    =
    K_{\mathrm{nat}}(\widehat U).
\end{align*}

\paragraph{Row-Centering.}
For any finite matrix $Z\in\mathbb R^{n\times n}$,
define $\eta(Z)\in\mathbb R^n$ by
\begin{align*}
    \eta_i(Z)
    =
    \log\left(
        \sum_{j=1}^n \exp(Z_{ij})
    \right).
\end{align*}
The definition of row-wise softmax gives
\begin{align*}
    \log\operatorname{Softmax}(Z)
    =
    Z-\eta(Z)\mathbf 1^\top.
\end{align*}
Since $\mathbf 1^\top H=0$, right multiplication by $H$
removes the row-wise normalization constants:
\begin{align}
    \bigl(
        \log\operatorname{Softmax}(Z)
    \bigr)H
    =
    ZH.
    \label{eq:app-softmax-centering}
\end{align}
Applying this identity to the two attention matrices yields
\begin{align*}
    (\log\Pi_{\mathrm{full}}-G)H
    &=
    (\widetilde L-G)H=T,
    \\
    (\log\Pi_F-G)H
    &=
    \widehat Q\widehat K^\top H.
\end{align*}
Consequently,
\begin{align}
    (\log\Pi_{\mathrm{full}}-\log\Pi_F)H
    =
    T-\widehat Q\widehat K^\top H.
    \label{eq:app-centered-comparison}
\end{align}

\paragraph{Low-Rank Constraint.}
The native query and key features each have
$d_{\mathrm{att}}$ columns.
Therefore,
\begin{align*}
    \operatorname{rank}
    (\widehat Q\widehat K^\top H)
    \leq
    \operatorname{rank}
    (\widehat Q\widehat K^\top)
    \leq
    d_{\mathrm{att}}.
\end{align*}
This bound holds for arbitrary updated features $\widehat U$,
including updates that depend on both $S$ and $P$.
The constraint follows from the native query--key dimension,
not from restrictions on the scalar-feature adapter.

\paragraph{Approximation Bound.}
The Eckart--Young theorem~\citep{eckart1936approximation}
characterizes the best rank-$d_{\mathrm{att}}$ approximation
to $T$ in Frobenius norm.
Together with \eqref{eq:app-centered-comparison}, it gives
\begin{align}
    &\left\|
        (\log\Pi_{\mathrm{full}}-\log\Pi_F)H
    \right\|_F^2
    \notag\\
    &\quad=
    \left\|
        T-\widehat Q\widehat K^\top H
    \right\|_F^2
    \notag\\
    &\quad\geq
    \min_{\substack{
        Z\in\mathbb R^{n\times n}\\
        \operatorname{rank}(Z)\leq d_{\mathrm{att}}
    }}
    \|T-Z\|_F^2
    \notag\\
    &\quad=
    \sum_{r>d_{\mathrm{att}}}\sigma_r(T)^2.
    \label{eq:app-feature-only-lower-bound}
\end{align}
We interpret an empty sum as zero.
The minimization considers all matrices of rank at most
$d_{\mathrm{att}}$, including matrices that the frozen
native projections might not realize.
It therefore gives a valid lower bound for every
scalar-feature-only adapter.

If $\operatorname{rank}(T)>d_{\mathrm{att}}$,
at least one singular value in the sum is strictly positive.
Identical attention matrices would make the left-hand side
of \eqref{eq:app-feature-only-lower-bound} zero,
which gives a contradiction.
Thus, no scalar-feature-only adapter reproduces
$\Pi_{\mathrm{full}}$ under the stated rank condition.

\paragraph{Explicit Pair-Conditioned Contribution.}
We additionally establish the stronger condition for a
comparison that retains the single-conditioned attention bias
but omits the explicit pair-conditioned term.
Following \cref{eq:attention-fusion}, write
$\Delta L=C+B$, where
\begin{align*}
    C_{ij}
    &=
    \frac{m_j}{\sqrt{d_b}}
    (U_iW_{\mathrm Q}^{L})
    (\operatorname{LN}(S_j)W_{\mathrm K}^{L})^\top,
    \\
    B_{ij}
    &=
    m_i m_j f_p(P_{ij})W_{\mathrm P}^{L}.
\end{align*}
Let $D_m=\operatorname{diag}(m)$ and define
\begin{align*}
    X
    &=
    \frac{1}{\sqrt{d_b}}UW_{\mathrm Q}^{L},
    \\
    Y
    &=
    D_m\operatorname{LN}(S)W_{\mathrm K}^{L}.
\end{align*}
Both $X$ and $Y$ have $d_b$ columns, and $C=XY^\top$.
Hence,
\begin{align*}
    \operatorname{rank}(C)\leq d_b.
\end{align*}

Consider a comparison with logits
\begin{align*}
    L_{\mathrm{no\text{-}pair}}
    =
    Q_0K_0^\top+G+C_0,
\end{align*}
where
$Q_0,K_0\in\mathbb R^{n\times d_{\mathrm{att}}}$
denote its native query and key features, and $C_0$ has the
same single-conditioned factorized form with
$\operatorname{rank}(C_0)\leq d_b$.
This comparison may still use $P$ in scalar-feature fusion;
it only omits the explicit pair-conditioned logit term.
Define
\begin{align*}
    \Pi_{\mathrm{no\text{-}pair}}
    =
    \operatorname{Softmax}(L_{\mathrm{no\text{-}pair}}).
\end{align*}
Rank subadditivity gives
\begin{align*}
    \operatorname{rank}
    \bigl((Q_0K_0^\top+C_0)H\bigr)
    &\leq
    \operatorname{rank}(Q_0K_0^\top H)
    +
    \operatorname{rank}(C_0H)
    \\
    &\leq
    d_{\mathrm{att}}+d_b.
\end{align*}
Applying \eqref{eq:app-softmax-centering} and the
Eckart--Young theorem again, we obtain
\begin{align}
    &\left\|
        (\log\Pi_{\mathrm{full}}
        -
        \log\Pi_{\mathrm{no\text{-}pair}})H
    \right\|_F^2
    \notag\\
    &\quad=
    \left\|
        T-(Q_0K_0^\top+C_0)H
    \right\|_F^2
    \notag\\
    &\quad\geq
    \sum_{r>d_{\mathrm{att}}+d_b}
    \sigma_r(T)^2.
    \label{eq:app-explicit-pair-lower-bound}
\end{align}
Therefore, if
$\operatorname{rank}(T)>d_{\mathrm{att}}+d_b$,
even this stronger comparison cannot reproduce
$\Pi_{\mathrm{full}}$.
This establishes the additional expressivity condition
for the explicit pair-conditioned attention bias.
\end{proof}

\clearpage

\section{Experimental Details}
\label{app:experimental-details}

Our implementation combines frozen pretrained representations with trainable layerwise fusion. We describe the construction of these representations, the training and model-selection procedure, and the inference and evaluation protocol used across the three design tasks.

\subsection{Training, Validation, and Test Data}
\label{app:data-splits}
We construct the adaptation data from the existing AnewOmni-format dataset partitions~\citep{kong2026anewomni}. Antibody examples are drawn from the SAbDab training partition~\citep{dunbar2014sabdab}, peptide examples from the ProtFrag and PepBench training partitions~\citep{kong2024pepglad}, and small-molecule examples from the PDBbind v2020 training partition~\citep{liu2015pdbbind}. The validation set retains the corresponding source validation records: 400 antibody examples, 114 peptide examples, and 100 small-molecule examples, for a total of 614 records.

\paragraph{Training-Set Construction.}
Before sampling, we collect PDB identifiers from the source validation and test partitions across all three modalities, including the LNR peptide benchmark, and exclude matching entries from the training pools. Peptide preprocessing additionally excludes exact receptor--peptide sequence pairs present in its validation or LNR sets. We then freeze 10,000 logical training positions per modality, yielding 30,000 positions in total. Sampling uses eligible training records, with replacement when needed to reach the prescribed count. The peptide manifest contains 9,447 ProtFrag positions and 553 PepBench positions. For each antibody position, we randomly select one valid CDR and retain that identity throughout training; repeated complexes preferentially use previously unselected valid CDRs. The resulting training and validation manifests have no shared PDB identifiers, including across modalities, and remain fixed throughout optimization.

\paragraph{Validation and Test Evaluation.}
Validation data are used to select a single shared checkpoint for all three modalities according to the native denoising-loss criterion described below. Test evaluation uses the fixed benchmark collections of 60 RAbD antibody targets~\citep{adolfbryfogle2018rabd}, 93 LNR peptide targets~\citep{tsaban2022peptide,kong2024pepglad}, and 100 CrossDocked2020 small-molecule targets~\citep{francoeur2020crossdocked}, totaling 253 targets. These benchmark collections are assembled separately from the source training and validation partitions. Each model is evaluated with 30 generation attempts per target under the common inference protocol.

\subsection{Construction of Pretrained Donor Representations}
\label{app:donor-construction}
We extract single and pair representations from the frozen BoltzGen PairFormer to provide structural information about individual molecular blocks and their interactions. The input contains the visible molecular context and the prescribed design positions, with the identity and coordinates of the generated region masked. The resulting representations are aligned with AnewOmni's molecular blocks and reused throughout training and denoising.

\paragraph{Molecular Context and Design Positions.}
Visible protein residues retain their amino-acid identities and atomic coordinates. For peptide design, the prescribed peptide length determines the generated positions. For antibody design, the designated CDR determines these positions, while the retained framework and antigen provide structural context. Protein design positions are represented by backbone placeholders with zero coordinates and no structural visibility. Their residue identities are replaced by unknown tokens before feature extraction, so neither the reference binder sequence nor its geometry is available to the donor. Protein residues are tokenized at the residue level using the native all-atom design featurization. We use single-sequence inputs without external multiple-sequence alignments or retrieved structural templates; the visible context structure enters through the native structural-conditioning pathway.

\paragraph{Representation Extraction.}
The frozen donor processes each input with three recycling steps. We retain the single and pair representations from the final PairFormer output, before structure generation:
\begin{equation}
 S^{\mathrm{tok}}\in\mathbb{R}^{N_{\mathrm{tok}}\times384},\qquad
 P^{\mathrm{tok}}\in\mathbb{R}^{N_{\mathrm{tok}}\times N_{\mathrm{tok}}\times128},
\end{equation}
where $N_{\mathrm{tok}}$ denotes the number of donor tokens. These features encode token-level structural information and pairwise relationships without requiring a sampled BoltzGen structure. Feature extraction is performed in evaluation mode without gradient computation, using bfloat16 mixed precision. Single and pair representations are retained in float32 and bfloat16, respectively, and projected by the trainable fusion encoders during adaptation.

\paragraph{Correspondence with Molecular Blocks.}
We align donor tokens with AnewOmni blocks using chain identifiers and residue correspondence. Blocks derived from the same protein residue share the corresponding donor token, and chain discontinuities are preserved. Let $\pi(i)$ denote the donor token associated with block $i$, and let $m_i\in\{0,1\}$ indicate whether this correspondence is available. The aligned features are
\begin{equation}
 S_i=S^{\mathrm{tok}}_{\pi(i)},\qquad
 P_{ij}=P^{\mathrm{tok}}_{\pi(i),\pi(j)},
 \label{eq:donor-alignment}
\end{equation}
for covered blocks and block pairs. Unavailable features are excluded through the correspondence masks in the fusion equations. This alignment preserves the distinction between donor tokens and the block-level latent representation used by the geometric denoiser.

\paragraph{Small-Molecule Coverage.}
In de novo small-molecule design, the ligand identity and chemical graph are unknown. We therefore exclude the reference ligand from the donor input and set $m_i=0$ for generated ligand blocks. Nonprotein context without a validated token correspondence is treated similarly. The transferred representations describe the available protein context rather than the unknown ligand. Generated ligand blocks can nevertheless attend to covered context tokens through scalar cross-attention and the key-masked attention bias. They do not receive a directly aligned single representation or a pair representation requiring both endpoints to be covered. This restriction limits the ligand-specific information available for transfer and helps characterize the distinction between protein-binder and small-molecule design.

\paragraph{Reuse across Denoising Steps.}
The donor representations are computed once for each fixed molecular input and reused across training iterations and reverse-diffusion steps. They are independent of the current noisy latent state, whereas the fusion queries depend on the evolving EPT hidden features. The same masked representations are used when the native training condition sampler exposes additional generated-region information to AnewOmni. Donor features are not recomputed for each sampled condition. Both pretrained models remain frozen, and only the fusion parameters are optimized.

\subsection{Training and Model Selection}
\label{app:training-recipe}
We train the fusion adapters on 10,000 fixed records per modality, with antibody CDR identities fixed throughout training. Both pretrained backbones remain frozen. Each PairFuse and baseline training run uses a twenty hour allocation on four NVIDIA A100, including initialization, training, and validation. We select the checkpoint with the lowest combined native validation total loss on the 614-record validation set. This criterion combines latent-feature and coordinate denoising losses using the native loss weighting and weights modality-level means by their numbers of generated blocks. Ties retain the earlier checkpoint. The selected checkpoint is shared across antibody, peptide, and small-molecule design.

\begin{table}[t]
\centering
\caption{Trainable parameter counts for the compared methods.}
\label{tab:trainable-parameters}
\begingroup
\small
\setlength{\tabcolsep}{12pt}
\renewcommand{\arraystretch}{1.15}
\begin{tabular}{lr}
\toprule
Method & Trainable parameters \\
\midrule
LLM4GEN & 4,028,016 \\
EnzyAdapter & 4,027,715 \\
IP-Adapter & 4,031,232 \\
\midrule
PairFuse (Ours) & 4,027,712 \\
\bottomrule
\end{tabular}
\endgroup
\end{table}

\subsection{Hyperparameters and Optimization}
\label{app:hyperparameters}
The number of fusion layers is set to six, with one fusion module at each EPT layer.
We use AdamW with a learning rate of $3\times10^{-4}$, weight decay of 0.01, an effective batch size of 32, and gradient clipping at 1. The learning rate follows a 100-update warmup and a cosine schedule over the training allocation, with a minimum value of 20\% of the initial learning rate. PairFuse uses training seeds 20261007, 42, and 123, with checkpoints selected at updates 2,838, 2,732, and 2,500, respectively. IP-Adapter and EnzyAdapter use training seeds 1, 123, and 42. LLM4GEN uses three training seeds: 1, 123123, and 42. The pathway controls use training seed 42. Training uses modality-specific microbatches under the native batch-cost limit.

\subsection{Inference and Evaluation}
\label{app:inference-protocol}
We evaluate 60 RAbD targets, 93 LNR targets, and 100 CrossDocked2020 targets. Each model receives 30 generation attempts per target, using 100 diffusion steps, and batch size 10.

For antibody and peptide design, we report structural RMSD, amino-acid recovery, steric clashes, Rosetta energies, and interface quality. Interface metrics comprise local-interface DockQ, interface RMSD, native contact recall, and backbone interface lDDT. Native metrics pool finite candidates within each run; interface metrics average candidates within each target and then weight targets equally. For small-molecule design, we report Vina scores, QED, normalized synthetic accessibility, and Lipinski rule compliance, pooling candidate scores. The original structural, energy, and chemical metrics follow the definitions used in AnewOmni~\citep{kong2026anewomni}.

For the four interface metrics, residues are matched by their benchmark identities and design-block correspondence. Native-contact recall is the percentage of reference interfacial residue pairs with a heavy-atom distance below 5~\AA{} that are recovered in the generated complex. Interface RMSD measures backbone deviation after superposing the native interface residues, defined by a 10~\AA{} interchain distance threshold. DockQ combines native-contact recovery, interface RMSD, and receptor-aligned binder RMSD using the standard DockQ normalization. Backbone interface lDDT averages distance-recovery fractions at 0.5, 1, 2, and 4~\AA{} tolerances over reference target--binder backbone-atom pairs within 15~\AA{}. Missing generated atoms count as unrecovered pairs.

\paragraph{Small-Molecule Main-Table Endpoints.}
Vina score evaluates the generated pose, and Vina min evaluates it after local minimization; both are measured in kcal/mol. High Affinity follows the better-than-reference docking criterion of TargetDiff~\citep{guan2023targetdiff}: for each target, we average $\mathbf{1}[E_{\mathrm{dock}}^{\mathrm{gen}}<E_{\mathrm{dock}}^{\mathrm{ref}}]$ over candidates with finite generated and reference Vina-dock scores, then average target percentages within each run and combine runs using the main-results aggregation.

Atom-type MAE follows the CBGBench mean-count definition~\citep{lin2024cbgbench}. Within each run, let $\bar n_a$ be the mean count of element $a$ per molecule with a finite substructure record, and $r_a$ its fixed CBGBench reference mean. We report
\[
  \mathrm{MAE}_{\mathrm{atom}}=\frac{1}{7}\sum_{a\in\{\mathrm{C,N,O,F,P,S,Cl}\}}|\bar n_a-r_a|,
\]
and combine the run-specific estimates using the main-results aggregation. Lower MAE indicates closer agreement with the reference atom-type composition. QED is the quantitative estimate of drug-likeness and ranges from 0 to 1. The complete comparison also reports Vina dock, normalized synthetic accessibility and 2D Diversity.

\section{Ablations and Computational Analysis}\label{app:ablations}
We examine how single and pair representations contribute to generation quality and quantify the computational overhead of layerwise fusion. The efficiency analysis compares inference runtime and memory usage with the pretrained baseline.

\subsection{Single and Pair Representations}
\label{sec:ablation-study}
We examine the contributions of single and pair representations by comparing single-only, pair-only, and joint fusion in the same multilayer interface. Tables~\ref{tab:ablation-rabd}--\ref{tab:ablation-crossdocked2020} report results across all three modalities with a fixed 20261007 random seed. Joint fusion improves antibody complex RMSD from 2.4263 to 2.3769~\AA{} and DockQ from 0.6980 to 0.7025. For peptide design, it reduces complex RMSD from 8.4471 to 8.2497~\AA{} and increases DockQ from 0.4324 to 0.4393. For small molecules, pair-only fusion yields the highest QED (0.5801). These results show that single representations complement pair representations in structural and interface prediction, with different effects across ligand properties.
\begingroup
\begingroup
\setlength{\intextsep}{6pt}
\begin{table}[H]\centering
\setlength{\abovecaptionskip}{4pt}\setlength{\belowcaptionskip}{4pt}
\caption{Ablation of transferred representations: Antibody (RAbD; HCDR3).}\label{tab:ablation-rabd}
\label{tab:ablation}
\fontsize{10}{12}\selectfont
\setlength{\tabcolsep}{3pt}
\renewcommand{\arraystretch}{1.18}
\begin{tabular*}{\linewidth}{@{\extracolsep{\fill}}lccccc@{}}
\toprule
 & \multicolumn{2}{c}{Structure and Sequence} & \multicolumn{3}{c}{Interface Quality}\\
\cmidrule(lr){2-3}\cmidrule(lr){4-6}
Method & \mbox{Complex RMSD (\AA{}) $\downarrow$} & AAR (\%) $\uparrow$ & DockQ $\uparrow$ & Contact (\%) $\uparrow$ & lDDT $\uparrow$\\
\midrule
Matched adapter & 2.5864 & 15.2559 & 0.6861 & 51.0801 & 0.6855\\
Single only & 2.5415 & \underline{15.6205} & 0.6858 & 50.9525 & 0.6870\\
Pair only & \underline{2.4263} & 15.2193 & \underline{0.6980} & \underline{52.2595} & \underline{0.6982}\\
Full PairFuse & \textbf{2.3769} & \textbf{15.7829} & \textbf{0.7025} & \textbf{52.5997} & \textbf{0.7035}\\
\bottomrule
\end{tabular*}
\end{table}
\endgroup\endgroup
\begingroup
\begingroup
\setlength{\intextsep}{6pt}
\begin{table}[H]\centering
\setlength{\abovecaptionskip}{4pt}\setlength{\belowcaptionskip}{4pt}
\caption{Ablation of transferred representations: Peptide (LNR).}\label{tab:ablation-lnr}
\fontsize{10}{12}\selectfont
\setlength{\tabcolsep}{3pt}
\renewcommand{\arraystretch}{1.18}
\begin{tabular*}{\linewidth}{@{\extracolsep{\fill}}lccccc@{}}
\toprule
 & \multicolumn{2}{c}{Structure and Sequence} & \multicolumn{3}{c}{Interface Quality}\\
\cmidrule(lr){2-3}\cmidrule(lr){4-6}
Method & \mbox{Complex RMSD (\AA{}) $\downarrow$} & AAR (\%) $\uparrow$ & DockQ $\uparrow$ & Contact (\%) $\uparrow$ & lDDT $\uparrow$\\
\midrule
Matched adapter & 8.5092 & \textbf{6.4294} & 0.4297 & 33.2400 & 0.4206\\
Single only & \underline{8.3431} & 6.3114 & \underline{0.4356} & \underline{33.7244} & \underline{0.4256}\\
Pair only & 8.4471 & 6.3370 & 0.4324 & 33.4552 & 0.4241\\
Full PairFuse & \textbf{8.2497} & \underline{6.3655} & \textbf{0.4393} & \textbf{34.1400} & \textbf{0.4293}\\
\bottomrule
\end{tabular*}
\end{table}
\endgroup\endgroup
\begingroup
\begingroup
\setlength{\intextsep}{6pt}
\begin{table}[H]\centering
\setlength{\abovecaptionskip}{4pt}\setlength{\belowcaptionskip}{4pt}
\caption{Ablation of transferred representations: Small molecule (CrossDocked2020).}\label{tab:ablation-crossdocked2020}
\fontsize{10}{12}\selectfont\setlength{\tabcolsep}{1.5pt}\renewcommand{\arraystretch}{1.12}
\begin{tabular*}{\linewidth}{@{\extracolsep{\fill}}lcccc@{}}\toprule
 & \multicolumn{3}{c}{{\fontsize{10}{11.5}\selectfont Vina (kcal/mol)}} & \multicolumn{1}{c}{{\fontsize{10}{11.5}\selectfont Chemical Quality}}\\
\cmidrule(lr){2-4}\cmidrule(lr){5-5}
{\fontsize{10}{11.5}\selectfont Method} & {\fontsize{10}{11.5}\selectfont Score $\downarrow$} & {\fontsize{10}{11.5}\selectfont Min $\downarrow$} & {\fontsize{10}{11.5}\selectfont Dock $\downarrow$} & {\fontsize{10}{11.5}\selectfont QED $\uparrow$}\\\midrule
Matched adapter & -5.1011 & \underline{-6.0286} & -7.0355 & \underline{0.5766}\\
Single only & -5.0512 & -5.9377 & -7.0351 & 0.5752\\
Pair only & \underline{-5.1495} & -5.9830 & \underline{-7.1045} & \textbf{0.5801}\\
Full PairFuse & \textbf{-5.2283} & \textbf{-6.0295} & \textbf{-7.1400} & 0.5755\\
\bottomrule\end{tabular*}
\end{table}
\endgroup\endgroup

\paragraph{Effect of Single Representations.}
Adding single representations to pair-only fusion improves all five displayed metrics on both antibody and peptide benchmarks, including all three interface-quality metrics. For antibodies, the gains extend from complex RMSD and DockQ to Rosetta dG, which decreases from $-2.2303$ to $-8.1333$~REU. For peptides, interface RMSD decreases from 3.0615 to 2.9919~\AA{} and contact recall increases from 33.4552\% to 34.1400\%, while pair-only fusion gives the lower intra-region clash rate (0.8165\% versus 0.8982\%). These results support using both representations to improve protein-binder geometry and interface recovery.

For ligand design, pair-only fusion yields a higher QED than joint fusion (0.5801 versus 0.5755). Together, these results show that single representations complement pair representations more consistently in antibody and peptide design, while introducing a trade-off between docking performance and chemical properties in ligand design.

\paragraph{Comparison with Native-Feature Adaptation.}
The Matched adapter row reports the performance of an equally sized adapter trained under identical data and compute budgets without pretrained donor representations. On both antibody and peptide design, Full PairFuse attains lower complex RMSD and higher DockQ, contact recall, and backbone interface lDDT than this reference. For small-molecule design, the matched adapter records Vina score, Vina min, and Vina dock values of $-5.1011$, $-6.0286$, and $-7.0355$~kcal/mol, together with a QED of 0.5766. This result establishes that transferring pretrained structural knowledge provides benefits that cannot be explained by adding a similarly sized trainable adapter.
 
\subsection{Computational Cost}
\label{sec:efficiency}
Table~\ref{tab:cost} reports runtime and peak allocated memory on one NVIDIA RTX A6000 (48 GB), using one fixed input per modality, batch size ten, and 100 diffusion steps. We report the mean and sample standard deviation of five synchronized measurements after one warm-up, and the largest peak PyTorch allocated memory across those repeats. Timing includes sampling and all-atom decoding, but excludes donor-feature extraction, projection-cache preparation, data loading, and file export. Only one model is resident at a time; PairFuse's allocated memory includes resident donor features and projection caches. PairFuse takes 10.63, 6.14, and 5.07 seconds per batch for antibody, peptide, and small-molecule inputs, respectively, compared with 9.44, 4.84, and 3.65 seconds for AnewOmni. The runtime overheads are 12.6\%, 27.0\%, and 39.2\%, with peak allocated memory increasing by approximately 0.19, 0.12, and 0.17 GB.

\begingroup
\setlength{\intextsep}{6pt}
\begin{table}[H]\centering
\setlength{\abovecaptionskip}{4pt}\setlength{\belowcaptionskip}{4pt}
\caption{Inference cost on one NVIDIA RTX A6000 (48 GB). Time is mean $\pm$ sample standard deviation per batch of ten candidates over five measurements after warm-up; memory is peak PyTorch allocated GPU memory. Bold and underline indicate the best and second-best displayed means.}\label{tab:cost}
\fontsize{10}{12}\selectfont\setlength{\tabcolsep}{2pt}\renewcommand{\arraystretch}{1.12}
\begin{tabular*}{\linewidth}{@{\extracolsep{\fill}}lcccccc@{}}\toprule
 & \multicolumn{2}{c}{{\fontsize{10}{11.5}\selectfont Antibody}} & \multicolumn{2}{c}{{\fontsize{10}{11.5}\selectfont Peptide}} & \multicolumn{2}{c}{{\fontsize{10}{11.5}\selectfont Small molecule}}\\
\cmidrule(lr){2-3}\cmidrule(lr){4-5}\cmidrule(lr){6-7}
{\fontsize{10}{11.5}\selectfont Method} & {\fontsize{10}{11.5}\selectfont Time (s) $\downarrow$} & {\fontsize{10}{11.5}\selectfont \mbox{Memory (GB) $\downarrow$}} & {\fontsize{10}{11.5}\selectfont Time (s) $\downarrow$} & {\fontsize{10}{11.5}\selectfont \mbox{Memory (GB) $\downarrow$}} & {\fontsize{10}{11.5}\selectfont Time (s) $\downarrow$} & {\fontsize{10}{11.5}\selectfont \mbox{Memory (GB) $\downarrow$}}\\\midrule
AnewOmni & $\textbf{9.44}\pm0.14$ & \textbf{12.90} & $\textbf{4.84}\pm0.03$ & \textbf{6.86} & $\textbf{3.65}\pm0.11$ & \textbf{4.29}\\
PairFuse (Ours) & $\underline{10.63}\pm0.09$ & \underline{13.09} & $\underline{6.14}\pm0.14$ & \underline{6.97} & $\underline{5.07}\pm0.15$ & \underline{4.47}\\
\bottomrule\end{tabular*}
\end{table}
\endgroup

\section{Algorithms}
\label{app:layerwise-fusion}

\Cref{alg:native-ept} summarizes the native EPT computation used in
AnewOmni~\citep{jiao2026equivariant,kong2026anewomni}.
\Cref{alg:layerwise-fusion} shows our layerwise modifications in blue,
and \cref{alg:fusion-modules} details the two fusion modules.

\providecommand{\eptkw}[1]{\textcolor{teal!80!black}{\textbf{#1}}}
\providecommand{\eptcomment}[1]{\textcolor{red!80!magenta}{\textit{\#~#1}}}
\providecommand{\fusionstep}[1]{\textcolor{blue!75!black}{#1}}

\subsection{Native EPT Architecture}
\label{app:native-ept}

\begin{algorithm}[H]
\caption{Architecture of the Native EPT}
\label{alg:native-ept}
\small
\begin{algorithmic}[1]

\State \eptkw{def} $\operatorname{GraphInitialization}(U,q)$:
\Statex \eptcomment{Initialize invariant and equivariant features}

\State $e_{ij}\gets
[u_i,u_j,\operatorname{RBF}(\|q_i-q_j\|_2)]$

\State $u_i\gets
\varphi_u\left(
u_i,\sum_{j\in\mathcal N(i)}
\varphi_s(e_{ij})\odot u_j
\right)$

\State $\vec v_i\gets
\operatorname{Mean}_{j\in\mathcal N(i)}
\varphi_v(e_{ij})\otimes(q_i-q_j)$

\State \eptkw{return} $U,\vec V$

\Statex
\State \eptkw{def}
$\operatorname{EquivariantSelfAttention}(U,\vec V,q)$:

\State $Q_k,K_k\gets UW_k^Q,UW_k^K$

\State $V_k\gets[UW_k^{V_u},\vec V W_k^{V_v}]$

\State $D_{ij}\gets\|q_i-q_j\|_2$

\State $G\gets\varphi_r(\operatorname{RBF}(D))$

\State $(U_k,\vec V_k)\gets
\operatorname{Softmax}
\left(
\frac{Q_kK_k^\top}{2\sqrt{d_h}}
-D+G
\right)V_k$

\State $U,\vec V\gets
\sum_k U_kW_k^{O_u},
\sum_k\vec V_kW_k^{O_v}$

\State \eptkw{return} $U,\vec V$

\Statex
\State \eptkw{def} $\operatorname{EquivariantFFN}(U,\vec V)$:

\State $\vec V_1,\vec V_2\gets
\operatorname{VectorProjection}(\vec V)$

\State $U,g\gets
\varphi_{\mathrm{FFN}}(U,\|\vec V_1\|_2)$

\State $\vec V\gets
\operatorname{LN}(g)\odot\vec V_2$

\State \eptkw{return} $U,\vec V$

\Statex
\State \eptkw{def} $\operatorname{EPT}(U^{\mathrm{in}},q_t)$:
\Statex \eptcomment{Apply the native EPT layers}

\State $U^{(0)},\vec V^{(0)}
\gets\operatorname{GraphInitialization}(U^{\mathrm{in}},q_t)$

\For{$\ell=0,\ldots,L-1$}

    \State $(U',\vec V')\gets
    (U^{(\ell)},\vec V^{(\ell)})
    +\operatorname{EquivariantSelfAttention}
    \left(
    \operatorname{LN}(U^{(\ell)}),
    \vec V^{(\ell)},q_t
    \right)$

    \State $(U'',\vec V'')\gets
    (U',\vec V')
    +\operatorname{EquivariantFFN}
    (\operatorname{LN}(U'),\vec V')$

    \State $(U^{(\ell+1)},\vec V^{(\ell+1)})
    \gets\operatorname{NativeConditioning}_{\theta}^{(\ell)}
    (U'',\vec V'')$

\EndFor

\State \eptkw{return}
$\operatorname{LN}(U^{(L)}),\vec V^{(L)}$

\end{algorithmic}
\end{algorithm}

\subsection{Architecture of the PairFormer-Augmented EPT}

We retain the native graph initialization, feed-forward layers,
and conditioning operations in \cref{alg:native-ept}.
The only additions are scalar-feature fusion and an attention-logit
bias, highlighted in blue in \cref{alg:layerwise-fusion}.

\begin{algorithm}[H]
\caption{Architecture of the PairFormer-Augmented EPT}
\label{alg:layerwise-fusion}
\small
\begin{algorithmic}[1]

\State \eptkw{def}
$\operatorname{EquivariantSelfAttention}
(U,\vec V,q;\fusionstep{\Delta L})$:

\State $Q_k,K_k\gets UW_k^Q,UW_k^K$

\State $V_k\gets[UW_k^{V_u},\vec V W_k^{V_v}]$

\State $D_{ij}\gets\|q_i-q_j\|_2$

\State $G\gets\varphi_r(\operatorname{RBF}(D))$

\State $(U_k,\vec V_k)\gets
\operatorname{Softmax}
\left(
\frac{Q_kK_k^\top}{2\sqrt{d_h}}
-D+G
\fusionstep{+\Delta L_k}
\right)V_k$

\State $U,\vec V\gets
\sum_k U_kW_k^{O_u},
\sum_k\vec V_kW_k^{O_v}$

\State \eptkw{return} $U,\vec V$

\Statex
\State \eptkw{def}
$\operatorname{PairFormerAugmentedEPT}
(U^{\mathrm{in}},q_t,S,P,m)$:

\State $U^{(0)},\vec V^{(0)}
\gets\operatorname{GraphInitialization}(U^{\mathrm{in}},q_t)$

\State \fusionstep{$\bar S\gets f_s(S)$}

\For{$\ell=0,\ldots,L-1$}

    \State \fusionstep{
    $\Delta U^{(\ell)}
    \gets\operatorname{ScalarFeatureFusion}
    (U^{(\ell)},\bar S,P;m)$}

    \State \fusionstep{
    $\widetilde U^{(\ell)}
    \gets U^{(\ell)}+\Delta U^{(\ell)}$}

    \State \fusionstep{
    $\Delta L^{(\ell)}
    \gets\operatorname{PairwiseAttentionBias}
    (U^{(\ell)},S,P;m)$}

    \State $(U',\vec V')\gets
    (\widetilde U^{(\ell)},\vec V^{(\ell)})
    +\operatorname{EquivariantSelfAttention}
    \left(
    \operatorname{LN}(\widetilde U^{(\ell)}),
    \vec V^{(\ell)},q_t;
    \fusionstep{\Delta L^{(\ell)}}
    \right)$

    \State $(U'',\vec V'')\gets
    (U',\vec V')
    +\operatorname{EquivariantFFN}
    (\operatorname{LN}(U'),\vec V')$

    \State $(U^{(\ell+1)},\vec V^{(\ell+1)})
    \gets\operatorname{NativeConditioning}_{\theta}^{(\ell)}
    (U'',\vec V'')$

\EndFor

\State \eptkw{return}
$\operatorname{LN}(U^{(L)}),\vec V^{(L)}$

\end{algorithmic}
\end{algorithm}

\subsection{PairFormer Fusion Modules}

We use the aligned block indexing from
\cref{sec:representation-transfer}.
The coverage mask $m_i$ indicates whether block $i$ has an
available PairFormer representation.
The masked softmax attends only to keys $j$ with $m_j=1$,
and pair-dependent terms vanish unless $m_i m_j=1$.
The projections below are layer-specific, with head index $k$;
the encoders $f_s$ and $f_p$ are shared across layers.

\begin{algorithm}[H]
\caption{PairFormer Fusion Modules}
\label{alg:fusion-modules}
\small
\begin{algorithmic}[1]

\State \eptkw{def}
$\operatorname{ScalarFeatureFusion}(U,\bar S,P;m)$:

\State \fusionstep{
$Q_k,K_k,V_k\gets
UW_{Q,k},\bar S W_{K,k},\bar S W_{V,k}$}

\State \fusionstep{
$A_k\gets
\operatorname{MaskedSoftmax}_{m}
\left(
\frac{Q_kK_k^\top}{\sqrt{d_h}}+b_k(P)
\right)$}

\State \fusionstep{
$\Delta U\gets
\operatorname{Concat}_k(A_kV_k)W_O
+\operatorname{diag}(m)(\bar S W_D)$}

\State \eptkw{return} $\Delta U$

\Statex
\State \eptkw{def}
$\operatorname{PairwiseAttentionBias}(U,S,P;m)$:

\State \fusionstep{
$\Delta L_{ij,k}\gets
\frac{m_j}{\sqrt{d_b}}
(U_iW_{Q,k}^{L})
(\operatorname{LN}(S_j)W_{K,k}^{L})^\top$}

\State \fusionstep{
$\Delta L_{ij,k}\gets
\Delta L_{ij,k}
+m_i m_j f_p(P_{ij})W_{P,k}^{L}$}

\State \eptkw{return} $\Delta L$

\end{algorithmic}
\end{algorithm}

\clearpage
\section{Complete Benchmark Metrics}
\label{app:four-model-metrics}
The complete metrics follow the model and aggregation settings in \cref{sec:trimodal-update,app:hyperparameters,app:inference-protocol}. 
Best means are bold and second-best means underlined only when the metric has a specified preference direction.
\par\ifdim\dimexpr\pagegoal-\pagetotal\relax<120pt\newpage\fi\subsection{Antibody design on RAbD}
\begingroup\fontsize{8}{9.5}\selectfont\setlength{\tabcolsep}{2.5pt}\renewcommand{\arraystretch}{1.08}
\setlength{\LTpre}{12pt}\setlength{\LTpost}{6pt}
\begin{longtable}{@{}>{\raggedright\arraybackslash}p{.30\linewidth}*{4}{>{\centering\arraybackslash}p{.11\linewidth}}>{\centering\arraybackslash}p{.20\linewidth}@{}}
\caption{Antibody design on RAbD: complete metrics.}\label{tab:full-rabd}\\\toprule
{\fontsize{9}{10.5}\selectfont Metric} & {\fontsize{9}{10.5}\selectfont AnewOmni} & {\fontsize{9}{10.5}\selectfont LLM4GEN} & {\fontsize{9}{10.5}\selectfont EnzyAdapter} & {\fontsize{9}{10.5}\selectfont IP-Adapter} & {\fontsize{9}{10.5}\selectfont \mbox{PairFuse (Ours)}}\\\midrule\endfirsthead
\multicolumn{6}{c}{Antibody design on RAbD (continued)}\\\toprule
{\fontsize{9}{10.5}\selectfont Metric} & {\fontsize{9}{10.5}\selectfont AnewOmni} & {\fontsize{9}{10.5}\selectfont LLM4GEN} & {\fontsize{9}{10.5}\selectfont EnzyAdapter} & {\fontsize{9}{10.5}\selectfont IP-Adapter} & {\fontsize{9}{10.5}\selectfont \mbox{PairFuse (Ours)}}\\\midrule\endhead
\midrule\multicolumn{6}{r}{Continued on next page}\\\endfoot\bottomrule\endlastfoot
Complex RMSD [\AA{}] $\downarrow$ & 2.5563 & 2.5844 & \underline{2.5055} & 2.5416 & \textbf{2.3748}\\
AAR [\%] $\uparrow$ & 15.6755 & \textbf{15.7987} & 15.6484 & 15.4042 & \underline{15.6882}\\
Intra-region clash [\%] $\downarrow$ & 0.7501 & 0.7917 & \textbf{0.6406} & 0.6542 & \underline{0.6454}\\
Interface clash [\%] $\downarrow$ & \textbf{0.2865} & 0.3648 & 0.3569 & 0.3502 & \underline{0.3107}\\
DockQ [0--1] $\uparrow$ & 0.6853 & 0.6837 & \underline{0.6888} & 0.6854 & \textbf{0.7019}\\
iRMSD [\AA{}] $\downarrow$ & 1.1905 & 1.2002 & \underline{1.1723} & 1.1865 & \textbf{1.1180}\\
Native-contact recall [\%] $\uparrow$ & 50.9592 & 50.9243 & \underline{51.0443} & 50.7063 & \textbf{52.4309}\\
Backbone interface lDDT [0--1] $\uparrow$ & 0.6863 & 0.6843 & \underline{0.6903} & 0.6870 & \textbf{0.7026}\\
structural/\allowbreak common\_\allowbreak region\_\allowbreak CA  & 11.0000 & 11.0000 & 11.0000 & 11.0000 & 11.0000\\
structural/\allowbreak common\_\allowbreak target\_\allowbreak CA  & 220.6500 & 220.6500 & 220.6500 & 220.6500 & 220.6500\\
structural/\allowbreak region\_\allowbreak CA\_\allowbreak count  & 11.0000 & 11.0000 & 11.0000 & 11.0000 & 11.0000\\
structural/\allowbreak region\_\allowbreak block\_\allowbreak count  & 11.0000 & 11.0000 & 11.0000 & 11.0000 & 11.0000\\
Contact F1 [0--1] $\uparrow$ & 0.5237 & 0.5220 & \underline{0.5247} & 0.5232 & \textbf{0.5402}\\
LRMSD [\AA{}] $\downarrow$ & 2.4984 & 2.5187 & \underline{2.4546} & 2.4888 & \textbf{2.3387}\\
clashes $\downarrow$ & \underline{0.1096} & \textbf{0.1090} & 0.1234 & 0.1139 & 0.1166\\
complex\_\allowbreak CA\_\allowbreak RMSD $\downarrow$ & 2.5563 & 2.5844 & \underline{2.5055} & 2.5416 & \textbf{2.3748}\\
complex\_\allowbreak CA\_\allowbreak RMSD\_\allowbreak le\_\allowbreak 2A $\uparrow$ & 0.3400 & 0.3375 & \underline{0.3680} & 0.3559 & \textbf{0.4161}\\
Contact precision [\%] $\uparrow$ & 57.2589 & 56.9365 & 57.1576 & \underline{57.4316} & \textbf{59.1374}\\
interface\_\allowbreak CA\_\allowbreak distance\_\allowbreak preservation $\uparrow$ & 0.6838 & 0.6821 & \underline{0.6884} & 0.6849 & \textbf{0.7013}\\
internal\_\allowbreak CA\_\allowbreak RMSD $\downarrow$ & 1.8388 & 1.8547 & \underline{1.8193} & 1.8356 & \textbf{1.7242}\\
internal\_\allowbreak CA\_\allowbreak distance\_\allowbreak preservation $\uparrow$ & 0.7859 & 0.7841 & \underline{0.7887} & 0.7866 & \textbf{0.7976}\\
\end{longtable}\endgroup
\newpage
\par\ifdim\dimexpr\pagegoal-\pagetotal\relax<120pt\newpage\fi\subsection{Peptide design on LNR}
\begingroup\fontsize{8}{9.5}\selectfont\setlength{\tabcolsep}{2.5pt}\renewcommand{\arraystretch}{1.08}
\setlength{\LTpre}{12pt}\setlength{\LTpost}{6pt}
\begin{longtable}{@{}>{\raggedright\arraybackslash}p{.30\linewidth}*{4}{>{\centering\arraybackslash}p{.11\linewidth}}>{\centering\arraybackslash}p{.20\linewidth}@{}}
\caption{Peptide design on LNR: complete metrics.}\label{tab:full-lnr}\\\toprule
{\fontsize{9}{10.5}\selectfont Metric} & {\fontsize{9}{10.5}\selectfont AnewOmni} & {\fontsize{9}{10.5}\selectfont LLM4GEN} & {\fontsize{9}{10.5}\selectfont EnzyAdapter} & {\fontsize{9}{10.5}\selectfont IP-Adapter} & {\fontsize{9}{10.5}\selectfont \mbox{PairFuse (Ours)}}\\\midrule\endfirsthead
\multicolumn{6}{c}{Peptide design on LNR (continued)}\\\toprule
{\fontsize{9}{10.5}\selectfont Metric} & {\fontsize{9}{10.5}\selectfont AnewOmni} & {\fontsize{9}{10.5}\selectfont LLM4GEN} & {\fontsize{9}{10.5}\selectfont EnzyAdapter} & {\fontsize{9}{10.5}\selectfont IP-Adapter} & {\fontsize{9}{10.5}\selectfont \mbox{PairFuse (Ours)}}\\\midrule\endhead
\midrule\multicolumn{6}{r}{Continued on next page}\\\endfoot\bottomrule\endlastfoot
Complex RMSD [\AA{}] $\downarrow$ & 8.5619 & \underline{8.5006} & 8.5859 & 8.5664 & \textbf{8.2821}\\
Peptide RMSD [\AA{}] $\downarrow$ & 2.9091 & 2.8684 & \underline{2.8447} & 2.9068 & \textbf{2.8432}\\
AAR [\%] $\uparrow$ & 6.4306 & \underline{6.4805} & 6.4383 & \textbf{6.5084} & 6.3436\\
Intra-region clash [\%] $\downarrow$ & 1.0151 & 0.8918 & \textbf{0.8438} & 0.9654 & \underline{0.8452}\\
Interface clash [\%] $\downarrow$ & 0.8863 & \textbf{0.7135} & 0.8847 & 0.9950 & \underline{0.8026}\\
DockQ [0--1] $\uparrow$ & 0.4280 & \underline{0.4302} & 0.4272 & 0.4248 & \textbf{0.4383}\\
iRMSD [\AA{}] $\downarrow$ & 3.1039 & \underline{3.0790} & 3.1104 & 3.1030 & \textbf{3.0039}\\
Native-contact recall [\%] $\uparrow$ & 33.1855 & \underline{33.2962} & 32.9352 & 32.7373 & \textbf{34.0958}\\
Backbone interface lDDT [0--1] $\uparrow$ & 0.4197 & \underline{0.4213} & 0.4193 & 0.4169 & \textbf{0.4291}\\
structural/\allowbreak common\_\allowbreak region\_\allowbreak CA  & 10.1935 & 10.1935 & 10.1935 & 10.1935 & 10.1935\\
structural/\allowbreak common\_\allowbreak target\_\allowbreak CA  & 279.4624 & 279.4624 & 279.4624 & 279.4624 & 279.4624\\
structural/\allowbreak region\_\allowbreak CA\_\allowbreak count  & 10.4731 & 10.4731 & 10.4731 & 10.4731 & 10.4731\\
structural/\allowbreak region\_\allowbreak block\_\allowbreak count  & 10.4731 & 10.4731 & 10.4731 & 10.4731 & 10.4731\\
Contact F1 [0--1] $\uparrow$ & 0.3413 & \underline{0.3423} & 0.3386 & 0.3369 & \textbf{0.3487}\\
LRMSD [\AA{}] $\downarrow$ & 8.5605 & \underline{8.4984} & 8.5815 & 8.5603 & \textbf{8.2761}\\
clashes $\downarrow$ & 0.3900 & \underline{0.2995} & 0.3568 & 0.3645 & \textbf{0.2933}\\
complex\_\allowbreak CA\_\allowbreak RMSD $\downarrow$ & 8.5619 & \underline{8.5006} & 8.5859 & 8.5664 & \textbf{8.2821}\\
complex\_\allowbreak CA\_\allowbreak RMSD\_\allowbreak le\_\allowbreak 2A $\uparrow$ & 0.0688 & \underline{0.0722} & 0.0663 & 0.0649 & \textbf{0.0800}\\
Contact precision [\%] $\uparrow$ & 35.8817 & \underline{35.9828} & 35.6403 & 35.4452 & \textbf{36.4300}\\
interface\_\allowbreak CA\_\allowbreak distance\_\allowbreak preservation $\uparrow$ & 0.4149 & \underline{0.4161} & 0.4142 & 0.4119 & \textbf{0.4239}\\
internal\_\allowbreak CA\_\allowbreak RMSD $\downarrow$ & 2.9091 & 2.8684 & \underline{2.8447} & 2.9068 & \textbf{2.8432}\\
internal\_\allowbreak CA\_\allowbreak distance\_\allowbreak preservation $\uparrow$ & 0.7114 & 0.7148 & \underline{0.7153} & 0.7117 & \textbf{0.7175}\\
\end{longtable}\endgroup
\par\ifdim\dimexpr\pagegoal-\pagetotal\relax<120pt\newpage\fi\subsection{Small-molecule design on CrossDocked2020}
\begingroup\fontsize{8}{9.5}\selectfont\setlength{\tabcolsep}{2.5pt}\renewcommand{\arraystretch}{1.08}
\setlength{\LTpre}{12pt}\setlength{\LTpost}{6pt}
\begin{longtable}{@{}>{\raggedright\arraybackslash}p{.30\linewidth}*{4}{>{\centering\arraybackslash}p{.11\linewidth}}>{\centering\arraybackslash}p{.20\linewidth}@{}}
\caption{Small-molecule design on CrossDocked2020: complete metrics.}\label{tab:full-crossdocked2020}\\\toprule
{\fontsize{9}{10.5}\selectfont Metric} & {\fontsize{9}{10.5}\selectfont AnewOmni} & {\fontsize{9}{10.5}\selectfont LLM4GEN} & {\fontsize{9}{10.5}\selectfont EnzyAdapter} & {\fontsize{9}{10.5}\selectfont IP-Adapter} & {\fontsize{9}{10.5}\selectfont \mbox{PairFuse (Ours)}}\\\midrule\endfirsthead
\multicolumn{6}{c}{Small-molecule design on CrossDocked2020 (continued)}\\\toprule
{\fontsize{9}{10.5}\selectfont Metric} & {\fontsize{9}{10.5}\selectfont AnewOmni} & {\fontsize{9}{10.5}\selectfont LLM4GEN} & {\fontsize{9}{10.5}\selectfont EnzyAdapter} & {\fontsize{9}{10.5}\selectfont IP-Adapter} & {\fontsize{9}{10.5}\selectfont \mbox{PairFuse (Ours)}}\\\midrule\endhead
\midrule\multicolumn{6}{r}{Continued on next page}\\\endfoot\bottomrule\endlastfoot
Vina score [kcal/\allowbreak mol] $\downarrow$ & \underline{-5.1279} & -5.0685 & -5.1230 & -5.1040 & \textbf{-5.1517}\\
Vina minimize [kcal/\allowbreak mol] $\downarrow$ & \underline{-5.9715} & -5.9298 & -5.9478 & -5.9130 & \textbf{-5.9841}\\
Vina dock [kcal/\allowbreak mol] $\downarrow$ & -7.0580 & \textbf{-7.0891} & -7.0700 & -7.0646 & \underline{-7.0868}\\
QED [0--1] $\uparrow$ & 0.5745 & \textbf{0.5765} & \underline{0.5753} & 0.5675 & 0.5721\\
SA normalized [0--1] $\uparrow$ & 0.7684 & 0.7699 & \textbf{0.7771} & 0.7699 & \underline{0.7734}\\
Lipinski [/\allowbreak 5] $\uparrow$ & \underline{4.8066} & 4.7967 & 4.7935 & \textbf{4.8077} & 4.7759\\
chemical/\allowbreak logp  & 2.1805 & 2.2554 & 2.2341 & 2.1552 & 2.2446\\
geometry/\allowbreak atom\_\allowbreak stable\_\allowbreak num  & 19.1098 & 19.1661 & 18.8761 & 18.8311 & 19.3131\\
geometry/\allowbreak inter\_\allowbreak clash\_\allowbreak atom\_\allowbreak num $\downarrow$ & 0.0310 & \underline{0.0254} & \textbf{0.0246} & 0.0256 & 0.0270\\
geometry/\allowbreak inter\_\allowbreak clash\_\allowbreak detected $\downarrow$ & 0.0266 & 0.0249 & \textbf{0.0229} & \underline{0.0239} & 0.0249\\
geometry/\allowbreak intra\_\allowbreak clash\_\allowbreak atom\_\allowbreak num $\downarrow$ & 20.8222 & 20.8821 & \textbf{20.5754} & \underline{20.5862} & 21.0447\\
geometry/\allowbreak mol\_\allowbreak stable $\uparrow$ & \textbf{0.3295} & \underline{0.3274} & 0.3257 & 0.3095 & 0.3088\\
interaction/\allowbreak num\_\allowbreak halogen  & 0.0300 & 0.0378 & 0.0396 & 0.0346 & 0.0383\\
interaction/\allowbreak num\_\allowbreak hydrogen  & 2.2633 & 2.2328 & 2.2416 & 2.1802 & 2.2381\\
interaction/\allowbreak num\_\allowbreak hydrophobic  & 3.2538 & 3.2202 & 3.3403 & 3.3032 & 3.3776\\
interaction/\allowbreak num\_\allowbreak metal  & 0.0000 & 0.0000 & 0.0000 & 0.0000 & 0.0000\\
interaction/\allowbreak num\_\allowbreak pi\_\allowbreak cation  & 0.0862 & 0.0898 & 0.0890 & 0.0862 & 0.0936\\
interaction/\allowbreak num\_\allowbreak pi\_\allowbreak stack  & 0.3663 & 0.3633 & 0.3588 & 0.3365 & 0.3628\\
interaction/\allowbreak num\_\allowbreak wb  & 0.0000 & 0.0000 & 0.0000 & 0.0000 & 0.0000\\
substructure/\allowbreak atom\_\allowbreak type/\allowbreak 15  & 1.2286 & 1.1303 & 1.0989 & 1.1779 & 1.0898\\
substructure/\allowbreak atom\_\allowbreak type/\allowbreak 16  & 1.1647 & 1.1715 & 1.1287 & 1.1531 & 1.1422\\
substructure/\allowbreak atom\_\allowbreak type/\allowbreak 17  & 1.1200 & 1.1702 & 1.1749 & 1.1640 & 1.1831\\
substructure/\allowbreak atom\_\allowbreak type/\allowbreak 35  & 1.0000 & 1.0174 & 1.0100 & 1.0058 & 1.0000\\
substructure/\allowbreak atom\_\allowbreak type/\allowbreak 5  & 1.0000 & 1.0000 & 1.0278 & 1.0000 & 1.0500\\
substructure/\allowbreak atom\_\allowbreak type/\allowbreak 53  & 1.0000 & 1.0000 & 1.0000 & 1.0000 & 1.0000\\
substructure/\allowbreak atom\_\allowbreak type/\allowbreak 6  & 15.1064 & 15.1600 & 15.0284 & 14.8997 & 15.3570\\
substructure/\allowbreak atom\_\allowbreak type/\allowbreak 7  & 3.0425 & 3.0114 & 2.8763 & 2.9830 & 2.9325\\
substructure/\allowbreak atom\_\allowbreak type/\allowbreak 8  & 2.8065 & 2.7875 & 2.8059 & 2.7781 & 2.8410\\
substructure/\allowbreak atom\_\allowbreak type/\allowbreak 9  & 1.8372 & 1.8361 & 1.9381 & 2.0365 & 1.9328\\
substructure/\allowbreak functional\_\allowbreak group/\allowbreak N=CN  & 1.0000 & 1.0000 & 1.0000 & 1.0062 & 1.0090\\
substructure/\allowbreak functional\_\allowbreak group/\allowbreak NC(=O)O  & 1.0000 & 1.0000 & 1.0000 & 1.0000 & 1.0000\\
substructure/\allowbreak functional\_\allowbreak group/\allowbreak NC(N)=O  & 1.0000 & 1.0132 & 1.0071 & 1.0199 & 1.0104\\
substructure/\allowbreak functional\_\allowbreak group/\allowbreak NC=O  & 1.1318 & 1.1284 & 1.1607 & 1.1595 & 1.1423\\
substructure/\allowbreak functional\_\allowbreak group/\allowbreak NS(=O)=O  & 1.0000 & 1.0000 & 1.0083 & 1.0000 & 1.0000\\
substructure/\allowbreak functional\_\allowbreak group/\allowbreak O=CNO  & 1.0000 & 1.0000 & 1.0133 & 1.0139 & 1.0392\\
substructure/\allowbreak functional\_\allowbreak group/\allowbreak O=CO  & 1.0646 & 1.0690 & 1.0589 & 1.0641 & 1.0650\\
substructure/\allowbreak functional\_\allowbreak group/\allowbreak O=P(O)(O)O  & 1.1429 & 1.0833 & 1.0000 & 1.0000 & 1.0000\\
substructure/\allowbreak functional\_\allowbreak group/\allowbreak O=P(O)O  & 1.0000 & 1.0500 & 1.0000 & 1.1111 & 1.0000\\
substructure/\allowbreak functional\_\allowbreak group/\allowbreak O=S=O  & 1.0000 & 1.0000 & 1.0000 & 1.0000 & 1.0000\\
substructure/\allowbreak functional\_\allowbreak group/\allowbreak O=c1cc[nH]c(=O)[nH]1  & 1.0000 & 1.0000 & 1.0219 & 1.0139 & 1.0093\\
substructure/\allowbreak functional\_\allowbreak group/\allowbreak OCO  & 1.0541 & 1.0239 & 1.0000 & 1.0311 & 1.0655\\
substructure/\allowbreak functional\_\allowbreak group/\allowbreak c1c[nH]cn1  & 1.0106 & 1.0320 & 1.0226 & 1.0080 & 1.0177\\
substructure/\allowbreak functional\_\allowbreak group/\allowbreak c1ccc2[nH]ccc2c1  & 1.0444 & 1.0204 & 1.0293 & 1.0403 & 1.0186\\
substructure/\allowbreak functional\_\allowbreak group/\allowbreak c1ccc2[nH]cnc2c1  & 1.0000 & 1.0000 & 1.0093 & 1.0000 & 1.0090\\
substructure/\allowbreak functional\_\allowbreak group/\allowbreak c1ccc2ccccc2c1  & 1.0000 & 1.0000 & 1.0000 & 1.0000 & 1.0210\\
substructure/\allowbreak functional\_\allowbreak group/\allowbreak c1ccc2ncccc2c1  & 1.0185 & 1.0187 & 1.0197 & 1.0269 & 1.0294\\
substructure/\allowbreak functional\_\allowbreak group/\allowbreak c1ccccc1  & 1.3758 & 1.3767 & 1.4198 & 1.3726 & 1.4212\\
substructure/\allowbreak functional\_\allowbreak group/\allowbreak c1ccncc1  & 1.0498 & 1.0334 & 1.0341 & 1.0338 & 1.0470\\
substructure/\allowbreak functional\_\allowbreak group/\allowbreak c1ccsc1  & 1.0130 & 1.0125 & 1.0288 & 1.0221 & 1.0142\\
substructure/\allowbreak functional\_\allowbreak group/\allowbreak c1cn[nH]c1  & 1.0361 & 1.0181 & 1.0290 & 1.0170 & 1.0464\\
substructure/\allowbreak functional\_\allowbreak group/\allowbreak c1cncnc1  & 1.0190 & 1.0718 & 1.0486 & 1.0465 & 1.0495\\
substructure/\allowbreak functional\_\allowbreak group/\allowbreak c1cscn1  & 1.0147 & 1.0000 & 1.0000 & 1.0142 & 1.0235\\
substructure/\allowbreak functional\_\allowbreak group/\allowbreak c1ncc2nc[nH]c2n1  & 1.0000 & 1.0000 & 1.0000 & 1.0000 & 1.0000\\
substructure/\allowbreak ring\_\allowbreak type/\allowbreak 10  & 1.0000 & 1.0000 & 1.0000 & 1.0000 & 1.1037\\
substructure/\allowbreak ring\_\allowbreak type/\allowbreak 11  & 1.5000 & 1.0000 & 1.0000 & 1.0667 & 1.1111\\
substructure/\allowbreak ring\_\allowbreak type/\allowbreak 3  & 1.0000 & 1.0000 & 1.0000 & 1.0000 & 1.0000\\
substructure/\allowbreak ring\_\allowbreak type/\allowbreak 4  & 1.0000 & 1.0000 & 1.0000 & 1.0000 & 1.0000\\
substructure/\allowbreak ring\_\allowbreak type/\allowbreak 5  & 1.3451 & 1.3551 & 1.3121 & 1.3323 & 1.3246\\
substructure/\allowbreak ring\_\allowbreak type/\allowbreak 6  & 2.0944 & 2.0462 & 2.0433 & 2.0272 & 2.0803\\
substructure/\allowbreak ring\_\allowbreak type/\allowbreak 7  & 1.0435 & 1.0442 & 1.0410 & 1.0371 & 1.0501\\
substructure/\allowbreak ring\_\allowbreak type/\allowbreak 8  & 1.0000 & 1.0147 & 1.0215 & 1.0295 & 1.0314\\
substructure/\allowbreak ring\_\allowbreak type/\allowbreak 9  & 1.0000 & 1.0000 & 1.0303 & 1.0256 & 1.0476\\
2D Diversity [0--1] $\uparrow$ & 0.9155 & 0.9153 & 0.9149 & \textbf{0.9169} & \underline{0.9156}\\
High Affinity [\%] $\uparrow$ & 44.8174 & 43.7900 & \underline{45.8629} & 44.4766 & \textbf{46.8298}\\
Atom-type MAE $\downarrow$ & 0.4188 & \underline{0.4153} & 0.4404 & 0.4551 & \textbf{0.3775}\\
\end{longtable}\endgroup

\clearpage
\def\arxivfont{\rm}
\bibliographystyle{plainnat}

\bibliography{refs}

\end{document}